\def\new@fontshape{}
\crefname{xnumi}{}{}
\tikzstyle{nnode} = [ellipse, text centered, draw=black, inner sep=2pt] 
\tikzstyle{dnode} = [rectangle, rounded corners, text centered, draw=black] 
\tikzstyle{arrow} = [thick,->,>=stealth]
\definecolor{colorwant}{HTML}{ABBEF1} 
\definecolor{colorgo}{HTML}{DE8BB3} 
\definecolor{colorthe}{HTML}{F1AD83} 
\definecolor{colorboy}{HTML}{FFE19D} 
\newcommand*{\minWidth}{25}  
\newcommand*{\maxValue}{100}
\newcommand{\makeBar}[3]{%
  \tikz[baseline]{
    \node[anchor=base,text width=\minWidth,align=#2,inner sep=0pt,inner xsep=4pt,outer sep=0pt] (n) {\strut{\hfill#1}};  
    \begin{pgfonlayer}{background}
        {
       \edef\color{#3}
       \pgfmathparse{abs(#1/\maxValue)}
       \edef\contents{{\pgfmathresult}}
       \fill[font=\boldmath,color=\color] (n.north west) rectangle ($(n.south west)!{{\contents}}!(n.south east)$);  
       \fill[font=\boldmath,color=cyan!10] ($(n.north west)!{{\contents}}!(n.north east)$) rectangle (n.south east);  
       }
    \end{pgfonlayer}
  }
}
\newcommand{\asbar}[1]{\makeBar{#1}{center}{cyan!25}}
\definecolor{orange}{rgb}{1,0.5,0}
\definecolor{mdgreen}{rgb}{0.05,0.6,0.05}
\definecolor{mdblue}{rgb}{0,0,0.7}
\definecolor{dkblue}{rgb}{0,0,0.5}
\definecolor{dkgray}{rgb}{0.3,0.3,0.3}
\definecolor{slate}{rgb}{0.25,0.25,0.4}
\definecolor{gray}{rgb}{0.5,0.5,0.5}
\definecolor{ltgray}{rgb}{0.7,0.7,0.7}
\definecolor{purple}{rgb}{0.7,0,1.0}
\definecolor{lavender}{rgb}{0.65,0.55,1.0}
\definecolor{brown}{rgb}{0.6,0.2,0.2}
\newcommand{\code}[1]{}
\newcommand{\hop}{{$k$-hop}}
\newcommand{\hopl}{{$k$-hop}\textsubscript{\textit{large}}}
\newcommand{\hops}{{$k$-hop}\textsubscript{\textit{small}}}
\newcommand{\twohops}{{$2$-hop}\textsubscript{\textit{small}}}
\newcommand{\twohopl}{{$2$-hop}\textsubscript{\textit{large}}}
\newcommand{\threehops}{{$3$-hop}\textsubscript{\textit{small}}}
\newcommand{\threehopl}{{$3$-hop}\textsubscript{\textit{large}}}
\newcommand{\fourhops}{{$4$-hop}\textsubscript{\textit{small}}}
\newcommand{\fourhopl}{{$4$-hop}\textsubscript{\textit{large}}}
\newcommand{\twohop}{{$2$-hop}}
\newcommand{\threehop}{{$3$-hop}}
\newcommand{\fourhop}{{$4$-hop}}
\newcommand{\mathe}{$|\mathcal{E}|$}
\newcommand{\mathr}{$|\mathcal{R}|$}
\renewcommand{\paragraph}[1]{\textbf{#1}}
\newcolumntype{g}{>{\columncolor{orange}}c}
\newcommand{\makedBar}[3]{%
  \tikz[baseline]{
    \node[anchor=base,text width=\minWidth,inner sep=0pt,inner xsep=4pt,outer sep=0pt] (n) {\strut{}};  
    \begin{pgfonlayer}{background}
      \edef\color{#3}%
      \pgfmathparse{abs(#1/\maxValue)}%
      \edef\contents{{\pgfmathresult}}%
      \fill[font=\boldmath,color=\color]
        (n.north west) rectangle ($(n.south west)!{\contents}!(n.south east)$);  
      \fill[font=\boldmath,color=cyan!10]
        ($(n.north west)!{\contents}!(n.north east)$) rectangle (n.south east);  
    \end{pgfonlayer}
  }
}
\newcommand{\dasbar}[1]{\makedBar{#1}{center}{cyan!25}}
\theoremstyle{plain}
\newtheorem{theorem}{Theorem}[section]
\theoremstyle{definition}
\theoremstyle{remark}
\title{Language models can learn implicit multi-hop reasoning,\\but only if they have lots of training data}
\author{Yuekun Yao \\ Saarland University \\  ykyao@coli.uni-saarland.de 
        \And Yupei Du \\ Utrecht University \\ y.du@uu.nl
        \And Dawei Zhu \\ Saarland University \\ dzhu@lsv.uni-saarland.de
        \AND
        Michael Hahn\thanks{\quad Joint senior authors.} \\ Saarland University \\ mhahn@lst.uni-saarland.de
        \And Alexander Koller\footnotemark[1] \\ Saarland University \\ koller@coli.uni-saarland.de
        }
\begin{document}
\maketitle

\begin{abstract}
Implicit reasoning is the ability of a language model to solve multi-hop 
reasoning tasks in a single forward pass, without chain of thought.
We investigate this capability using GPT2-style language models 
trained from scratch on controlled $k$-hop reasoning datasets ($k = 2, 3, 4$). 
We show that while such models can indeed learn implicit $k$-hop reasoning,
the required training data grows exponentially in $k$, and the required
number of transformer layers grows linearly in $k$.
We offer a theoretical explanation for why this depth growth is necessary.
We further find that the data requirement can be mitigated, but not eliminated,
through curriculum learning.
\end{abstract}

\section{Introduction}

Large language models \citep{brown2020language, achiam2023gpt} have demonstrated strong capabilities in complex reasoning tasks \citep{jaech2024openai, guo2025deepseek}. 
With chain-of-thought methods \citep{wei2023chainofthoughtpromptingelicitsreasoning, nye2021workscratchpadsintermediatecomputation}, language models (LMs) learn to explicitly generate the intermediate steps of the given problem before generating the final answer. 
However, such methods incur long inference time \citep{chen2024not} and require costly annotations \citep{nye2021workscratchpadsintermediatecomputation, zelikman2022star}.
This raises the question: \textit{Can language models learn to reason effectively without explicit chain-of-thoughts, i.e., through implicit reasoning?}

\begin{figure}[!t]
    \centering
    \includegraphics[width=1\linewidth]{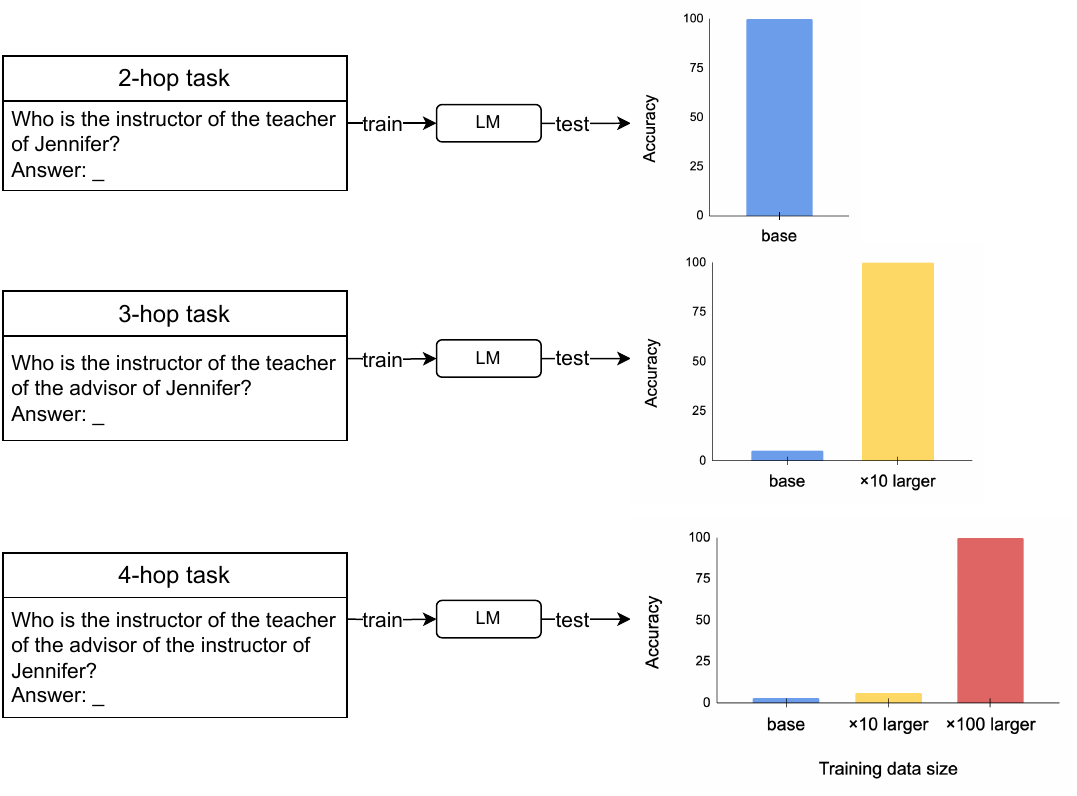}
    \caption{Example illustrating our finding: An LM can be trained to perform implicit $k$-hop reasoning, but requires a large increase in training data as $k$ grows.
    } 
    \label{fig:intro}
\end{figure}

There has been some research exploring implicit reasoning abilities of language models \citep{yang-etal-2024-large-language-models, biran-etal-2024-hopping, wang2024grokking}.
Such studies design their task in a two-hop question answering format, where the model is assumed to know individual facts like \textit{The father of A is B} and \textit{The teacher of B is C}, and then asked questions like \textit{Who is the teacher of the father of A?}. 
Findings from these works suggest that LMs can learn implicit reasoning by combining individual factual knowledge.
However, their reasoning tasks are limited to questions that can be solved with two intermediate steps (i.e.\ 2-hop), leaving more difficult $k$-hop ($k > 2$) reasoning questions alone.
Hence, it remains unclear whether language models can learn to perform such $k$-hop reasoning or not.

In this paper, we study the capacity of language models 
to learn $k$-hop reasoning tasks, where $k=2,3,4$.
By training a randomly initialized GPT2-style transformer \citep{vaswani2017attention, radford2019language} on knowledge (e.g.\ \textit{Jennifer 's instructor is Robert}) and knowledge-based questions (e.g.\ \textit{Who is the instructor of the instructor of Jennifer?}),
we study if such language models learn to generalize to questions that require novel combinations of learned facts.

Our study addresses three research questions. 
\begin{itemize}
    \item First, \textit{can LMs learn implicit $k$-hop reasoning, and if so, under what conditions?}  
Our findings suggest that LMs can indeed learn implicit $k$-hop reasoning, but doing so requires exponentially increasing data budgets as $k$ grows (see Figure \ref{fig:intro}), primarily due to the explosion in the search space of fact combinations.  

    \item Second, we investigate \textit{how models perform $k$-hop reasoning internally} through mechanistic interpretability experiments.  
Our analysis reveals that models trained with sufficient data systematically derive intermediate hop entities in a layer-wise manner, progressing from shallow layers to deeper layers in a step-by-step fashion, consistent with \citet{biran-etal-2024-hopping,wang2024grokking}.
We further show a theoretical lower bound (Theorem~\ref{thm:main-theorem}) suggesting that such a mechanism, with the required depth growing with $k$, may be unavoidable for the transformer architecture.

\item Third, motivated by the substantial data requirements for $k$-hop reasoning, we ask: \textit{How can we reduce the data budget for $k$-hop reasoning?}  
We explore the use of easier ($m$-hop, $m < k$) tasks as auxiliary training signals. 
Our findings show that curriculum learning \citep{elman1993learning,bengio2009curriculum}, which introduces tasks in a progressively harder order, significantly reduces the required training data, while simply mixing $m$-hop tasks with $k$-hop tasks provides only modest gains.
\end{itemize}

Bringing our findings together, we answer the broader question \textit{Can language models learn implicit reasoning?} with a "yes, but" response.  
Language models can solve $k$-hop reasoning; however, this capability comes at the cost of an exponential increase in training data and at least linear growth in model depth as $k$ increases.  
Curriculum learning serves as a significantly effective mitigation strategy to reduce the training data requirement, but the data growth issue still persists.

The code and datasets are available online \footnote{\href{https://github.com/ykyaol7/lm_implicit_multihop_reasoning}{github.com/ykyaol7/lm\_implicit\_multihop\_reasoning}}.

\section{Related work}

\paragraph{Implicit reasoning.}
Many works have shown the power of explicit reasoning ability of language models \citep{wei2023chainofthoughtpromptingelicitsreasoning, saparov2022language, jaech2024openai}.
However, such powerful models, even after heavy pretraining \citep{achiam2023gpt}, generally come with negative results on implicit reasoning tasks \citep{press-etal-2023-measuring, dziri2023faith}.
Relevant studies can mainly be categorized into two groups according to the evaluation task: knowledge-based reasoning \citep{kassner-etal-2020-pretrained, press-etal-2023-measuring, yang2024large}, and mathematical reasoning \citep{nanda2023progress, stolfo-etal-2023-mechanistic}.
In this paper, we study the former task, and we show that GPT2-style language models, are indeed capable of multi-hop reasoning in the cost of training data requirements.

Most previous work studies knowledge-based reasoning with existing large language models \citep{yang2024large, biran-etal-2024-hopping, press-etal-2023-measuring}, where language models are assumed to gain single-hop knowledge through pretraining and evaluated on multi-hop tasks.
Our work instead trains language models on synthetic datasets, which allows us to accurately attribute the model behavior to particular aspects like data and models.
\citet{wang2024grokking} also train a transformer on synthetic datasets to evaluate \twohop\ reasoning.
By contrast, we investigate this question across increasingly complex tasks (e.g.\ $2,3,4$-hop), and we shed light on possible methods that can help under such challenging cases.

\paragraph{Memorization and generalization.}
To train a language model to fit a training set, the model could either memorize all training instances (i.e.\ overfitting), or develop a generalizable solution that solves the test set. 
Previous work studies this in terms of {grokking} phenomenon \citep{power2022grokking, murty-etal-2023-grokking}.
Their findings suggest that both memorized and generalizable solutions exist as neural circuits in the learning process, and increasing training set size encourages the efficient one (i.e.\ generalizable solution) through weight decay \citep{nanda2023progress,varma2023explaining, zhu2024critical}.  
Compared to these work, our study suggests that training data size needs to exponentially grow according to the task difficulty, which provides a possible explanation for the failure of LLMs on complex implicit reasoning tasks.

\section{Dataset} \label{sec:dataset}
We introduce a $k$-hop reasoning dataset we created to train and evaluate LMs in this section. 
We focus on knowledge-based multi-hop reasoning, where generating the correct answer requires combining multiple known facts. 
Following previous work \citep{wang2024grokking,allen2024physics}, we generate datasets according to synthetic knowledge, which allows better control of the task difficulty and attribution of model behaviors. 

\subsection{Task description}
\begin{figure}[t]
    \centering
    \includegraphics[width=\linewidth]{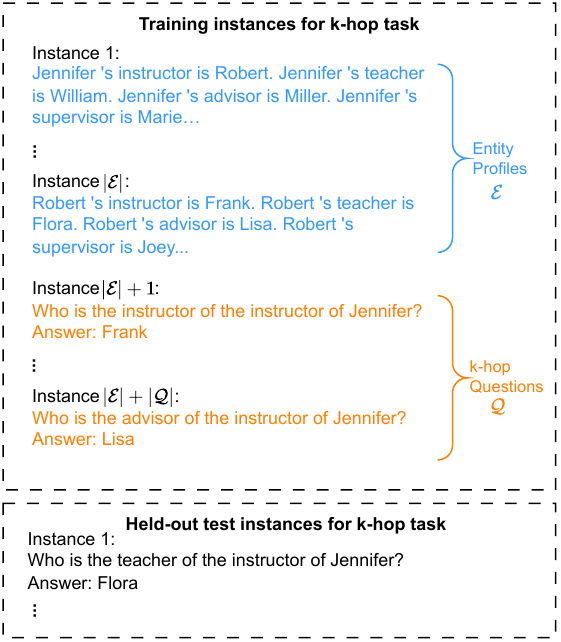}
    \caption{Example of our training and test dataset.
    Here, we use \twohop\ task as an example.}
    \label{fig:dataset_example}
\end{figure}

\paragraph{Definitions.}
The knowledge-based reasoning task includes two main aspects: facts and queries. 
Following prior definitions \citep{yang-etal-2024-large-language-models, wang2024grokking}, we represent a fact as a triple $(e, r, e')$, where $e$ is the subject entity, $r$ is a relation, and $e'$ is the object entity. 
Each relation $r$ acts as a function mapping a subject to an object: $r(e) \to e'$.

A $k$-hop query corresponds to the composition of $k$ such functions, formalized as $r_k(r_{k-1}(\dots r_1(e)))$. 
Answering this query requires reasoning over a chain of $k$ facts: $(e_1, r_1, e_1')$, $(e_1', r_2, e_2')$, \dots, $(e_{k-1}', r_k, e_k')$. 
The intermediate entities $e_1', e_2', \dots, e_{k-1}'$ are referred to as \textit{bridge entities}.
In a $k$-hop query, we refer to the components \((e_1', r_1)\), \((e_2', r_2)\), and so on as the 1-hop, 2-hop, and subsequent hops, respectively.
We thus call $e_1'$ the $1$-hop entity, and $r_1$ the $1$-hop relation.
While prior work has mostly focused on 2-hop queries involving a single bridge entity, we construct datasets for $k \in \{2, 3, 4\}$ to assess models' ability to handle increasingly complex reasoning chains.

\paragraph{Dataset format.}
We create one dataset for each $k \in \{2, 3, 4\}$ task.
Our dataset includes two components: (1) entity profiles encoding known facts, and (2) reasoning questions that query compositions of facts in natural language (see Figure \ref{fig:dataset_example}).
\begin{itemize} 
    \item An entity profile encodes all possible facts for a particular entity where the entity serves as the subject entity (e.g.\ \textit{Jennifer 's instructor is Robert, Jennifer 's teacher is William...}). 
    \item The prompt for our reasoning question is as simple as 
    ``\textit{Who is the teacher of the instructor of Jennifer? \textbackslash n Answer: }'', where \textit{instructor, teacher} refer to  relations and \textit{Jennifer} refers to the queried entity.  
\end{itemize}
We introduce details for generating profiles and questions in Section \ref{sec:dataset:gen}. 
To ensure the model has access to all entity profiles,
the training set includes all possible profiles together with randomly selected reasoning questions, and we use the held-out reasoning questions as the test set.

We construct two dataset variants by varying the number of entities (\mathe) and relations (\mathr): a larger dataset with \mathe$=500$, \mathr$=20$ (denoted \hopl), and a smaller one with \mathe$=250$, \mathr$=10$ (denoted \hops).

\subsection{Data generation} \label{sec:dataset:gen}

\begin{figure}[t]
    \centering
    \includegraphics[width=1\linewidth]{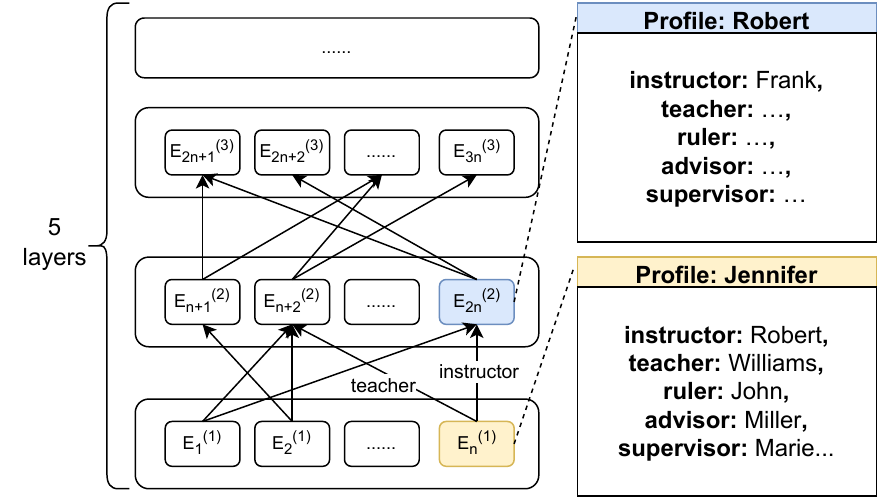}
    \caption{Profile sampling process. We always use 5 layers, and hence $n=$\mathe$/5$ (e.g.\ $100$ for \hopl).}
    \label{fig:datageneration}
\end{figure}

\paragraph{Profile sampling.}
We use the same set of entity profiles across $k = 2, 3, 4$ tasks to ensure fair comparison.
Figure \ref{fig:datageneration} illustrates the process to generate profiles. 
We first sample \mathe\ entity names (e.g.\ \textit{Jennifer}) from a predefined namespace and group them into $K$ disjoint hierarchical layers, where $K$ is the largest $k+1$ value. 
Since we consider $k < 5$, $K$ is fixed at 5. 
Each entity is then linked to \mathr\ randomly selected entities in the upper layer through distinct relations, with relation names reused across layers for generality. 
This structure guarantees that a composition of $k \in \{2, 3, 4\}$ relations starting from any entity in the bottom layer leads to a well-defined target entity.
More details are provided in Appendix \ref{app:dataset:baseline}.  

\begin{table*}[t]
  \centering
  \small
  \setlength{\tabcolsep}{4pt}
  \begin{tabular}{@{}ll|ccccccc@{}}
    \toprule
    & & \multicolumn{7}{c}{Training data budget} \\
    \multicolumn{2}{c}{Dataset}  & ×1 & ×2 & ×5 & ×10 & ×20 & ×50 & ×100 \\
    \midrule
    \multirow{3}{*}{\hops}
      & 2-hop & \asbar{99.8} & \dasbar{100} & \dasbar{100} & \dasbar{100} & \dasbar{100} & \dasbar{100} & \dasbar{100} \\
      & 3-hop & \asbar{5.7}  & \asbar{12.6} & \asbar{99.9} & \asbar{100} & \dasbar{100} & \dasbar{100} & \dasbar{100} \\
      & 4-hop & \asbar{4.3}  & \asbar{6.2}  & \asbar{6.7}    & \asbar{9.2}  & \asbar{96.4}   & \asbar{100}  & \asbar{100}  \\
    \midrule
    \multirow{3}{*}{\hopl}
      & 2-hop & \asbar{99.9} & \dasbar{100} & \dasbar{100} & \dasbar{100} & \dasbar{100} & \dasbar{100} & \dasbar{100} \\
      & 3-hop & \asbar{2.5}  & \asbar{3.1}    & \asbar{4.9}  & \asbar{94.6} & \asbar{100}  & \dasbar{100}  & \dasbar{100}  \\
      & 4-hop & \asbar{2.0}  & \asbar{2.6}  & \asbar{3.1}  & \asbar{3.7}  & \asbar{4.0}  & \asbar{6.3}  & \asbar{100}  \\
    \bottomrule
  \end{tabular}
  \caption{Accuracy of GPT-2 on \hops\ and \hopl\ datasets with different training data budgets. Empty cells indicate that the data budget exceeds the number of possible questions.}
  \label{tab:baseline}
\end{table*}

\paragraph{Profile and question generation.}
Each fact $(e, r, e')$ is mapped to a natural language sentence using a simple template (e.g., \textit{\{subj\}'s \{relation\} is \{obj\}}). 
Following previous work \citep{allen2024physics}, all facts about a given subject entity are concatenated into a single paragraph to form that entity’s profile.
To construct reasoning questions, we sample entities from the bottom layer of the hierarchy (Figure \ref{fig:datageneration}) and recursively traverse $k$ relations to identify the correct answer. 
All valid $k$-hop queries are generated for each source entity.
For example, for $2$-hop queries on \hopl, we can generate up to \mathr$^2 \times$ \mathe$/5 = 40000$ instances.

\section{LMs can learn \hop\ reasoning, but at a large data cost} \label{sec:e1}
Our first objective is to establish that language models can learn implicit \hop\ reasoning, but this requires the number of training instances (i.e.\ \hop\ reasoning questions) grows exponentially as $k$ increases. 
In this section, we empirically demonstrate this by training models on our \hop\ datasets with $k=2,3,4$.

\subsection{Experiment setup} \label{sec:exp_setup}
\paragraph{Model.}
We adopt the smallest GPT-2 architecture \citep{radford2019language} as our model. 
Following recent studies \citep{allen2024physics}, we replace the original positional embeddings in GPT-2 with Rotary Position Embedding (RoPE) \citep{su2024roformer}.
We use the GPT-2 tokenizer \citep{radford2019language} and extend its vocabulary by adding all possible entity names from our dataset.
The training objective is the causal language modeling loss calculated over all tokens in each prompt.
In our main experiments, we train the model from scratch by randomly initializing all parameters. 
Additionally, we conduct experiments using the pretrained GPT-2 and its larger variants (see Appendix \ref{app:scaleup} for results).

\paragraph{Training.}
We set the training steps to 20k for all tasks except \fourhopl, where we extend the training to 40k steps to ensure convergence.
We apply a cosine learning rate scheduler with 1k warm-up steps.
Each experiment is repeated across three runs using different random seeds, and we report the average performance.
Details of hyperparameters for model architecture and training are provided in Appendix \ref{app:training}.

\paragraph{Dataset.}
We utilize the \hops\ and \hopl\ datasets introduced in Section \ref{sec:dataset} for training and evaluation, considering $k=2,3,4$.
This results in six datasets in total.
For the \twohop\ task, we generate all possible reasoning questions and randomly sample 50\% for the \twohopl\ training set and 80\% for \twohops.
All entity profiles are included in the training sets.
The test set consists of 3,000 instances randomly selected from the held-out questions, except for \twohops, which contains only 1,000 held-out questions.
We report the details and statistics of our datasets in Appendix \ref{app:dataset:baseline}.

For \threehop\ and \fourhop\ tasks, we find that the same data size as the \twohop\ training set results in random guessing performance.
Thus, we progressively increase the training data size by defining the base training budget $bg$ as the number of reasoning questions in the \twohop\ training set.
We create training sets by scaling $bg$ with ratios from the set $\{ \times 1, \times 2, \times 5, \times 10, \times 20, \times 50, \times 100 \}$.
For each ratio $r$, we randomly sample $r \times bg$ reasoning questions for training.
The test set for each $k$-hop task always includes 3,000 instances randomly sampled held-out instances except for \twohops.

\paragraph{Evaluation.} 
For each test instance,
we provide the language model with the prompt up to the answer token (e.g., ``\textit{Who is the instructor of the instructor of Jennifer? \textbackslash n Answer: }'') and evaluate the accuracy of the generated token against the gold answer.
Greedy decoding is used for evaluation.

\subsection{Results} \label{sec:base_results}

\begin{figure*}[!t] 
  \centering
  \begin{subfigure}[b]{0.48\textwidth}
    \includegraphics[width=\linewidth]{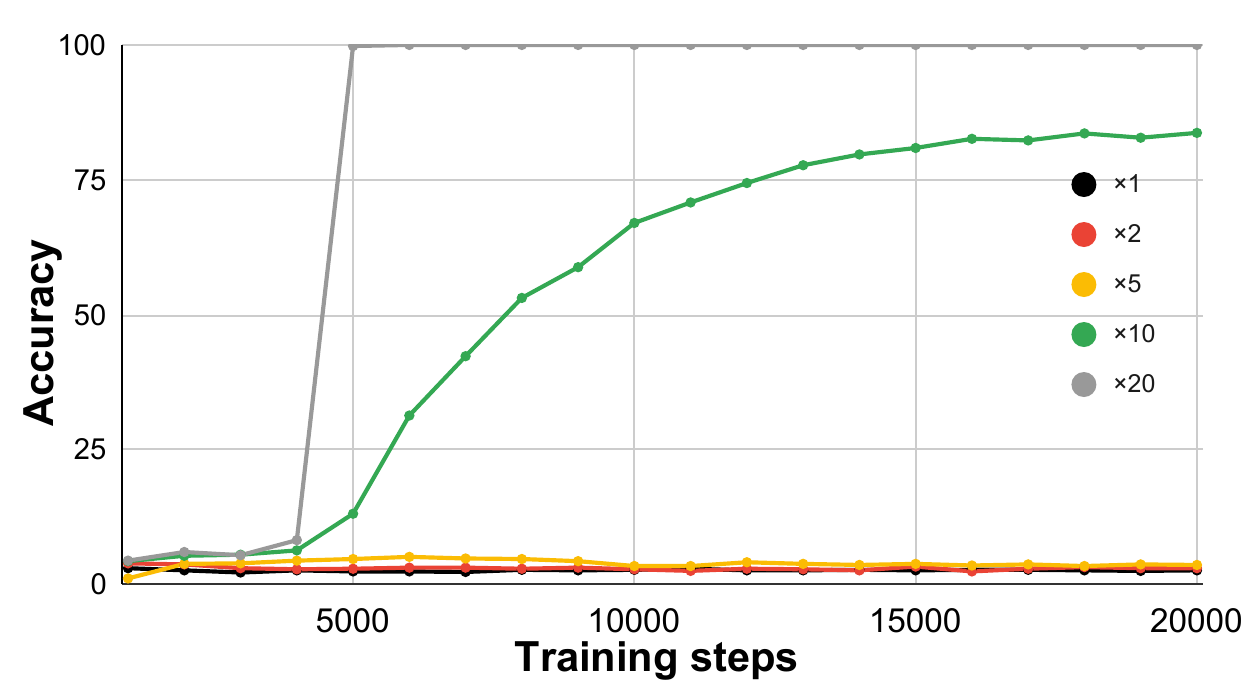}
    \caption{\threehopl}
    \label{fig:hop3_large}
  \end{subfigure}
  \hfill
  \begin{subfigure}[b]{0.48\textwidth}
    \includegraphics[width=\linewidth]{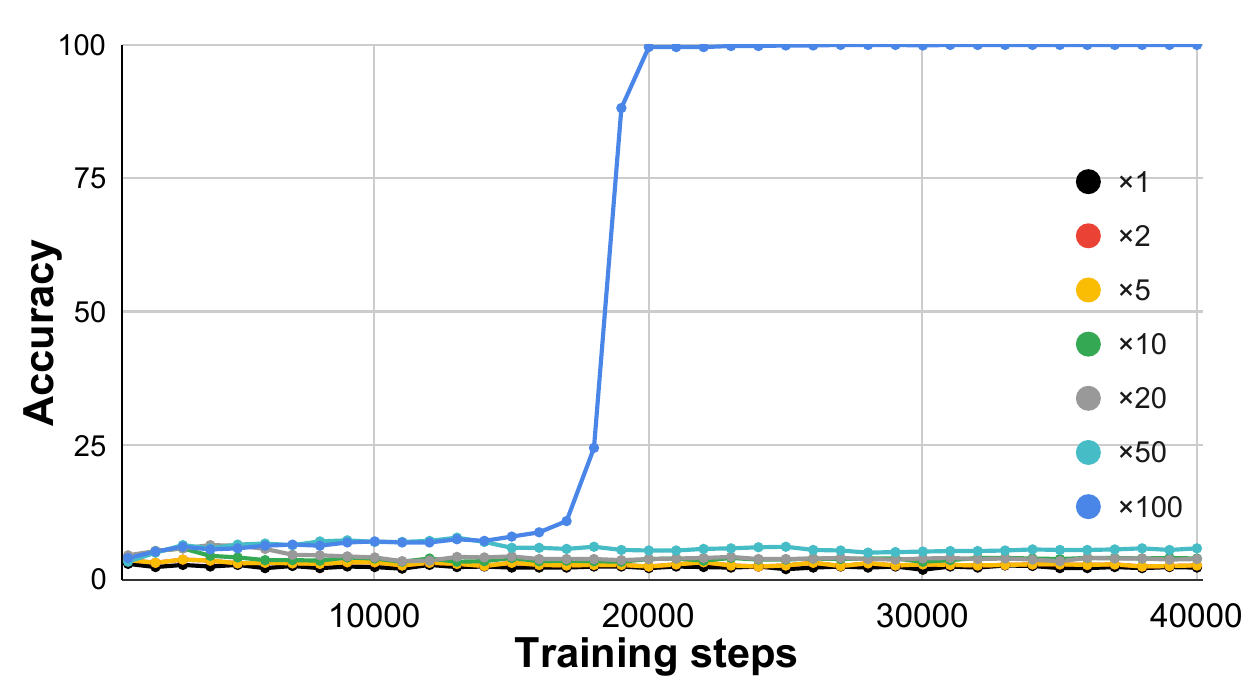}
    \caption{\fourhopl}
    \label{fig:hop4_large}
  \end{subfigure}
  \caption{Model accuracy on \threehopl\ and \fourhopl. 
    \threehopl\ can only be solved when the training budget is increased by at least a factor of $\times10$, while \fourhopl\ requires $\times100$. 
    Using $\times 20$ budget further encourages convergence on \threehopl\ compared to $\times10$ budget.
  }
  \label{fig:large_model_comparison}
\end{figure*}


\paragraph{Language models can learn $k$-hop reasoning.} 
Table \ref{tab:baseline} reports the test accuracy of our models under varying training data budgets.
Our first observation is that GPT-2 models are capable of achieving 100\% accuracy not only on \twohop\ tasks but also on more complex \threehop\ and \fourhop\ tasks, given a sufficiently large training data budget with the same $k$ as the test set. 
This is a significant finding, as each entity profile appears individually in the training set without any explicit instructions on how to combine them to solve multi-hop tasks.
The perfect accuracy suggests that language models can learn the underlying reasoning process based solely on input-output pairs, even without explicit rationales.

\paragraph{However, data requirements increase exponentially with $k$.}
We further observe that the base training data budget ($\times1$) is insufficient for the model to effectively learn \threehop\ and \fourhop\ tasks, as evidenced by test accuracy below $10\%$. 
As the training data budget increases, model performance improves correspondingly. 
We define a model as successfully learning the task if it achieves a test accuracy above $80\%$. 
On \hops\ datasets, a minimum budget of $\times5$ is necessary to learn the $3$-hop task, whereas the $4$-hop task requires a budget of at least $\times20$.  
On \hopl\ datasets, the data budget required for the $3$-hop task is $\times10$, and for the $4$-hop task, it escalates to $\times100$. 
These findings suggest that the training data budget grows in an exponential manner as the value of $k$ increases.

We also plot the test accuracy of one training run on \hopl\ across training steps in Figure \ref{fig:large_model_comparison} (For \hops\ results see Appendix \ref{app:detailed_results:hops}). 
The plots show that a larger training budget not only results in higher accuracy but also accelerates model convergence.
For instance, in Figure \ref{fig:hop3_large}, the $\times20$ budget reaches 100\% accuracy by step $5000$, while the $\times10$ budget only achieves $10\%$ accuracy at the same step. 
This finding is also consistent with \citet{wang2024grokking} reported in \twohop\ reasoning tasks. 
We extend these observations by demonstrating that the data budget becomes even more critical as the complexity of the reasoning task increases. 

\subsection{Why data-hungry?}

Results so far highlight the substantial data requirements for $k$-hop  tasks, but the reason for this remains unclear. 
Increasing the value of $k$ leads to both an increase in the number of combined facts (i.e., $k$ facts for each entity) and a corresponding exponential increase of the search space (i.e., \mathr$^k$ relation combinations per entity). 
Our objective here is to disentangle the effects of these two factors and identify the primary source of data inefficiency.

\begin{figure}[t]
    \centering
    \includegraphics[width=\linewidth]{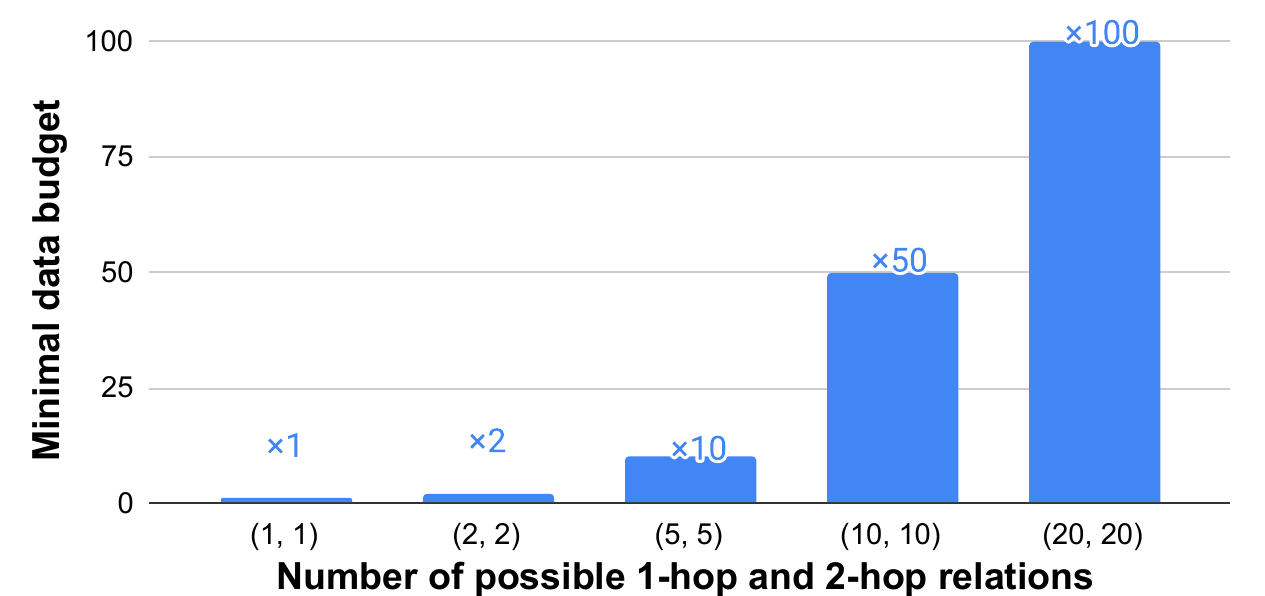}
    \caption{Case study on \fourhopl. 
    x-axis denotes the number of $1$-hop and $2$-hop relations, e.g.\ $(1,1)$ denotes that $1$-hop and $2$-hop relations are fixed across all $4$-hop questions.
    Fixing $1$-hop and $2$-hop relations reduces the required training budget to $\times1$, while increasing them leads to rapid budget growth. 
    }
    \label{fig:hop4_case}
\end{figure}

\paragraph{Setup.}
To investigate this question, we conduct a case study on the \fourhopl\ dataset, where we vary the number of $1$-hop and $2$-hop relations while holding the number of relations in the $3$-hop and $4$-hop positions constant.  
In the original dataset, each hop position can take one of \mathr$=20$ possible relations. 
For this study, we generate new training and test sets by limiting the number of $1$-hop and $2$-hop relations to values from the set $\{ 1, 2, 5, 10, 20 \}$. 
For each configuration, we train GPT-2 models and determine the minimal data budget required to achieve $80\%$ test accuracy.  

Figure \ref{fig:hop4_case} presents the results. 
We observe that when the number of $1$-hop and $2$-hop relations is restricted to a single relation, the model can successfully learn \fourhop\ task using the base data budget. 
However, as the number of relations increases, the required data budget rapidly increases. 
This result suggests that the main source of data inefficiency in $k$-hop reasoning tasks is the exponential growth in the number of relation combinations, rather than the number of individual facts to be combined.

\section{LMs reason through layer-wise lookup, incurring the cost of depth} \label{sec:e2}
The second objective is to understand the underlying mechanism by which  language models  solve the \hop\ task.
We first demonstrate that language models solve such tasks by layer-wise lookup of bridge entities of a $k$-hop query $r_k(r_{k-1}(\ldots r_1(e)))$ through empirical evidence (e.g.\ mechanistic interpretability).
Building on this finding, we then establish a theoretical lower bound, showing that the model’s depth must grow with $k$ to maintain such layer-wise lookup mechanism.

\subsection{Experiment setup}
We design two experiments to investigate the model's internal reasoning process: probing and causal intervention.
For both experiments, we select the model trained on \fourhopl\ with a $\times100$ budget, as it achieves strong performance.   

\paragraph{Probing.}
We use probing tasks \citep{belinkov-glass-2019-analysis, liu-etal-2019-linguistic} to assess whether information about intermediate bridge entities is encoded in the hidden representations. 
In this setup, we freeze the model parameters and train a linear probe classifier on top of the hidden states to predict the correct entity.
We train one probe classifier for each hop position, predicting the corresponding bridge entity in the query. 
The probe is trained across all transformer layers and all tokens in the prompt to identify where and when information about the bridge entities is encoded.  
We split the \fourhopl\ test set into $80/20\%$ training and evaluation sets for training the probe classifiers.

\paragraph{Causal intervention.}
While probing shows whether information about bridge entities is encoded in the hidden representations, it does not tell us whether the model actually relies on this information to generate the final answer. 
We further design \textit{activation patching} \citep{vig2020investigating, meng2022locating} experiments to investigate it.

The core idea of activation patching is to replace the residual stream (i.e., the output of a residual layer in a transformer block) at a specific layer $L_i$ and prompt token $t_j$, and measure the resulting change in the output probability of the correct answer. 
For convenience, we call this residual stream $res(L_i, t_j)$.
In this section, we focus on the last token position in the input prompt (i.e., $t_j$ always being the last input token, which is whitespace \textit{<space>}), as justified in Section~\ref{sec:e2_results}. 

Suppose we are given a \hop\ test instance and aim to measure the causal effect of $res(L_i, t_j)$, the residual stream at layer $L_i$ and token $t_j$. 
For clarity, we define three types of runs as follows.
\textbf{Clean Run:} The original forward pass of the test instance, producing the output probability of the correct answer as $P_{\text{clean}}$.  
\textbf{Corrupted Run:} A distinct \hop\ instance selected to serve as the source of the patched residual stream.  
\textbf{Patched Run:} The modified run, where the residual stream $res(L_i, t_j)$ in the clean run is replaced with the corresponding $res(L_i, t_j)$ from the corrupted run, leaving other layers unchanged.
The output probability in the patched run is denoted as $P_{\text{patched}}$. 
The causal effect of the targeted residual stream is defined as $P_{\text{clean}} - P_{\text{patched}}$, where a larger effect indicates greater reliance on the removed information. 
We calculate the causal effect for each layer and report the average effect across 3000 held-out instances.

The aim of our intervention experiment is to measure the effect of bridge entity information at different hop positions (e.g., $1$-hop, $2$-hop).
Hence, we define four types of corrupted runs for each clean run: \textit{C}\textsubscript{1-hop}, \textit{C}\textsubscript{2-hop}, \textit{C}\textsubscript{3-hop}, and \textit{C}\textsubscript{4-hop}. 
In a \textit{C}\textsubscript{i-hop} run, we select a corrupted instance where the gold $i$-hop entity differs from the clean run, while the entities of other hop positions remain unchanged. 
This setup allows us to measure the effect of perturbing a specific $i$-hop entity while keeping other bridge entities unchanged.

\subsection{Results} \label{sec:e2_results}

\begin{figure}[t]
    \centering
    \includegraphics[width=1.0\linewidth]{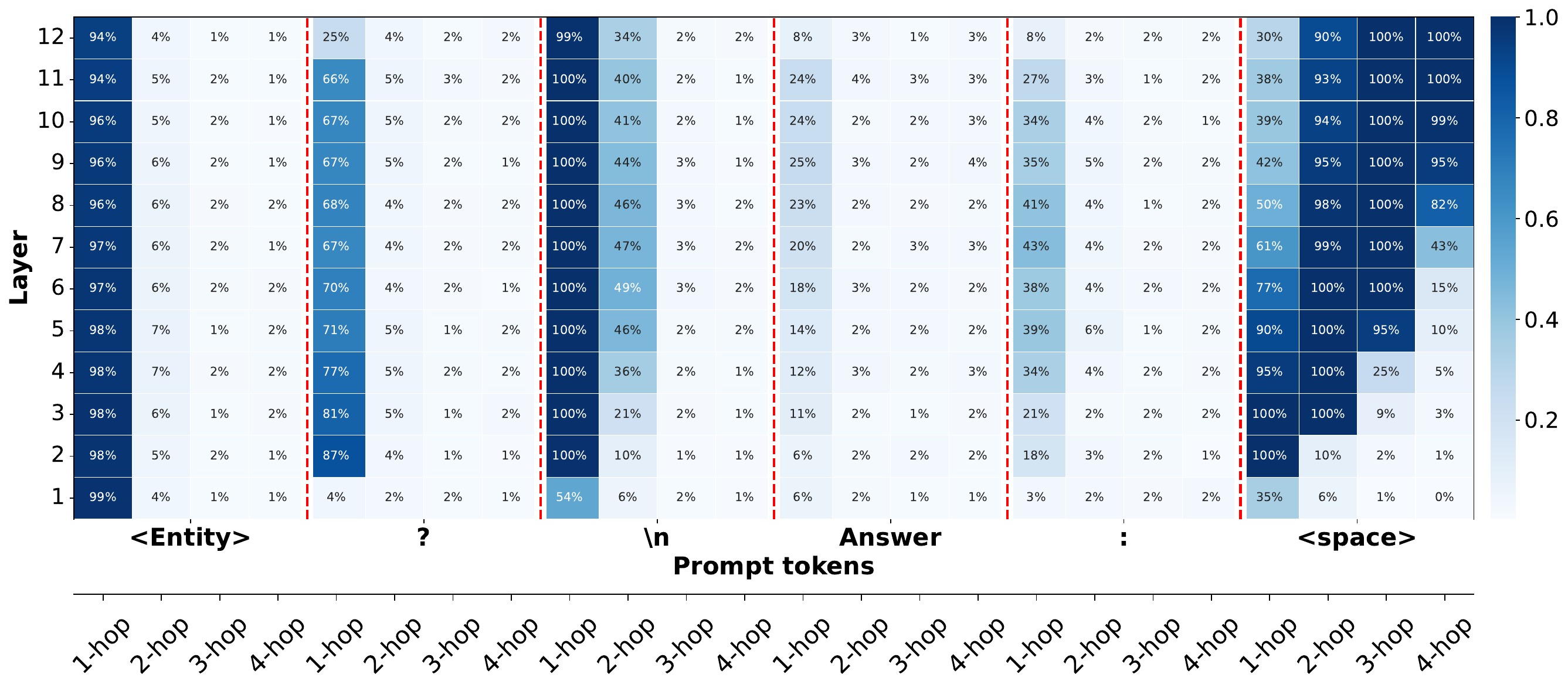}
    \caption{
    Probing results across tokens in the input prompt. Each token is represented by four columns, corresponding to $1$-hop to $4$-hop bridge entities. Note that tokens preceding \textit{<Entity>} cannot include information about any of the entities, and are thus not shown here.
    Only the \textit{<space>} token consistently encodes information about all four bridge entities, indicating that reasoning is concentrated at the last token before answer generation.
    }
    \label{fig:probe}
\end{figure}

\paragraph{Bridge entities are encoded in the last token position.}
Figure \ref{fig:probe} presents the probing results across layers and token positions in the input prompt, e.g., \textit{Who is the instructor of the teacher of the advisor of the instructor of <Entity>? \textbackslash n Answer:<space>}.
We report results only for tokens after the \textit{<Entity>} token, as preceding tokens cannot contain information about the target bridge entities.
Since the vocabulary size of each $i$-hop entity is 100, a random baseline provides $1\%$ accuracy.  

Notably, the hidden representation of the last input token encodes information about all necessary bridge entities for predicting the final answer.
Instead, probe classifiers show low accuracy for other token positions, suggesting that the reasoning process likely occurs in the position immediately before generating the final answer. 
We confirm this by observing zero casual effects on preceding tokens with additional activation patching experiment (see Appendix \ref{app:mi:preceding_tokens}).
We thus focus our causal intervention experiments on this \textit{<space>} token. 

\begin{figure}[t]
  \centering
    \includegraphics[width=.75\linewidth]{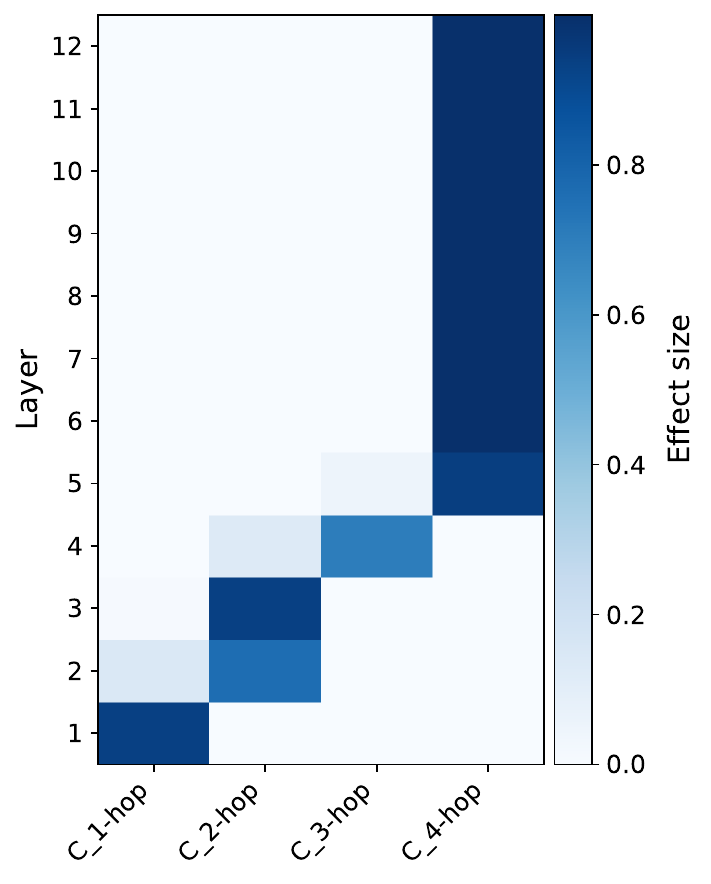}
    \label{fig:actpatch_heatmap}
  \caption{%
    Causal interventions reveal a layer-wise lookup mechanism:
    Intervening on the $1$-hop entity has the strongest effect in the 1st layer, and little effect in higher layers. Intervening on the $2$-hop entity has an effect mainly in the 2nd and 3rd layers; analogously for the $3$-hop and $4$-hop entities. Overall, these results indicate that entities are looked up layer-by-layer. 
  }
  \label{fig:intervention_heatmaps}
\end{figure}

\paragraph{Output prediction relies on bridge entity information.}
Figure \ref{fig:intervention_heatmaps} shows the causal effects across layers in our intervention experiment. 
For each $i$-hop entity, we identify specific layers that the model relies on to generate the final answer. 
Moreover, the model organizes the reasoning process in a layer-wise manner, with shallower layers handling lower-hop entities, and deeper layers handling higher-hop entities.
This layer-wise lookup confirms that the model leverages bridge entity information to perform multi-hop reasoning, which generalizes prior observations from \twohop\ tasks \citep{biran-etal-2024-hopping, wang2024grokking}. 

\subsection{Theoretical Analysis} \label{sec:theoretical_analysis}


We have found that language models perform the $k$-hop task by layer-wise lookup.
This suggests that transformers may need depth \emph{linear} in the number of reasoning steps.
%
Here, we discuss how this result relates to the in-principle expressiveness of transformers.
Formally, we consider a universe $\mathcal{E}$ of entities (e.g., $\{\textit{Jennifer}, \textit{Frank}, \dots\}$) and a set $\mathcal{R}$ of maps $r : \mathcal{E} \rightarrow \mathcal{E}$ (e.g., when $r=\textit{instructor}$, $e=Jennifer$, then $r(e)$ denotes the instructor of Jennifer).
We consider the task of mapping an input string 
\textit{``Who is the $r_k$  of the  $r_{k-1}$ \dots $r_{2}$ of the  $r_1$  of   $e$? Answer:''} (as in Figure~\ref{fig:dataset_example})
to the entity $r_{k}(\dots r_1(e)\dots) \in \mathcal{E}$ (e.g., \textit{instructor(teacher(Jennifer))}). 
We lower-bound the number of layers needed in the case where the \textit{attention pattern does not depend on the query $e$}.
We consider this a reasonable special case, as there is no obvious way in which query-dependent attention would help solve the $k$-hop task.
In this case:
\begin{theorem}[See Appendix~\ref{app:theory} for proof]\label{thm:main-theorem}
    Consider a causal transformer operating in $p$ bits of precision, with $d$ hidden units, $H$ heads and $L$ layers. Assume it performs $k$-hop reasoning over $\mathcal{E}$ and $\mathcal{R}$ as defined above.
    Assume further that the attention pattern does not depend on $e$.
    If $k \leq |\mathcal{E}|-2$, then, for some  $\mathcal{R}$,
    \begin{equation}
    L \geq \frac{k}{8pdH}
\end{equation}
\end{theorem}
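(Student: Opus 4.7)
The plan is a simulation-based lower bound: use the $e$-independence of the attention pattern to view the transformer as a data-oblivious circuit of depth $L$ with bounded per-layer bandwidth, and reduce the $k$-hop task for an adversarial $\mathcal{R}$ to a round-complexity lower bound for iterated function composition.

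I would first pick $\mathcal{R}$ to consist of permutations of $\mathcal{E}$ whose $k$-fold products are ``maximally non-trivial.'' A clean concrete choice is to take $\mathcal{R}$ to contain two generators of the symmetric group $S_{|\mathcal{E}|}$, say a long cycle and a transposition. The hypothesis $k \leq |\mathcal{E}|-2$ ensures that the $k$-fold compositions are not trapped in a tiny subgroup of $S_{|\mathcal{E}|}$, and hence that disambiguating $r_k \circ \cdots \circ r_1(e)$ genuinely requires $k$ sequential applications rather than a shortcut through group identities.

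Next I would turn the transformer into a communication protocol. Fixing $r_1,\dots,r_k$ and varying only $e$, the $e$-independence of attention means that each layer's update at the answer position is a fixed linear combination of per-head, per-position value vectors followed by a position-wise MLP. Quantized at $p$ bits, each of the $H$ heads carries $p(d/H)$ bits from each attended source; accounting for the fact that the $H$ heads can route from $H$ distinct source positions, one layer can fuse at most $O(pdH)$ bits about $e$ into the answer position. I would then invoke a pointer-chasing / round-elimination lower bound: for the adversarial $\mathcal{R}$ above, any $L$-round protocol of per-round bandwidth $b$ that computes $r_k \circ \cdots \circ r_1(e)$ correctly must satisfy $L \cdot b = \Omega(k)$. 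Plugging in $b = O(pdH)$ and tracking constants yields $L \geq k/(8pdH)$.

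The main obstacle is the per-layer bandwidth accounting. It is too generous to simply count the $pd$ bits of a single hidden state; to line up with the pointer-chasing lower bound one must count the $H$ independent head-channels and, more delicately, argue that even with arbitrarily expressive position-wise MLPs a single layer cannot ``precompute'' more than a constant number of sequential compositions. This is exactly where the fixed-attention hypothesis is essential — data-dependent attention would let the model redirect its lookups based on intermediate bridge entities and collapse the sequential chain — and I expect to formalize the bandwidth bound via a hybrid argument that replaces one layer's contribution at a time, or equivalently via an indistinguishability argument on pairs of queries that agree on $e$ but differ in an intermediate bridge entity.
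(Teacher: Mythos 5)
Your high-level instinct — reduce to communication complexity and exploit the $e$-independence of the attention pattern — matches the paper. But the mechanism you propose is both different from the paper's and, as you partly acknowledge, has an unresolved gap, and that gap is not incidental: it stems from choosing the wrong communication model.

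You frame the argument as a \emph{multi-round} pointer-chasing protocol with per-round bandwidth $O(pdH)$, and you want to invoke a round-elimination lower bound $L \cdot b = \Omega(k)$. You then correctly flag that the per-layer bandwidth accounting is the sticking point — you worry about needing to show that a single layer's position-wise MLP cannot ``precompute'' multiple compositions. The paper sidesteps this entirely. The key observation you are missing is that under $e$-independent attention, the protocol is \emph{one-shot}, not round-based. Alice (who holds $r_1,\dots,r_k$) can, by causal masking, compute every activation on the tokens preceding $e$. And because the attention weights at the final four tokens ``$e$ ? Answer :'' do not depend on $e$, Alice can also precompute, for each of the $HL$ heads at each of those four positions, the attention-weighted combination of value vectors over Alice's span. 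She sends Bob these $4HL$ vectors (each $pd$ bits) in one message; Bob then finishes the forward pass locally. This turns the whole transformer into a single message of at most $4LpdH$ bits, and the bound is a pure counting argument: distinct compositions $r_k\circ\cdots\circ r_1$ must induce distinct messages (else Bob would give the same answer for some $e$ where the compositions disagree), so $2^{4LpdH} \geq |\{r_k\circ\cdots\circ r_1\}|$. No per-layer budget, no hybrid argument, and no need to bound how much ``sequential composition'' an MLP can do — the MLP is on Bob's side and costs nothing.

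Your choice of $\mathcal{R}$ is also weaker than what the counting argument needs. You take two generators of $S_{|\mathcal{E}|}$ (long cycle plus transposition), hoping that length-$k$ words over them give many distinct products. In a finite group there are relations, so this is not automatic: the number of distinct products of length exactly $k$ from two fixed generators is not obviously exponential in $k$, and proving it is nontrivial. The paper instead picks $\mathcal{R}=\{f,g\}$ with a designated sink entity $0$, where $f$ cyclically shifts the non-sink entities and $g$ collapses the ``current first'' entity to the sink; words of length $k\leq|\mathcal{E}|-2$ avoiding the substring $gg$ provably yield pairwise-distinct compositions (one can read off the positions of $g$ from which $x_i$ map to $0$), and there are at least $(3/2)^k$ such words, giving $4LpdH \geq k\log_2\frac{3}{2} > k/2$ and hence $L\geq k/(8pdH)$. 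Your reading of the hypothesis $k\leq|\mathcal{E}|-2$ as preventing compositions from being ``trapped in a tiny subgroup'' does not match its actual role; it is there to keep the $f$-shifts from wrapping around so the position-of-$g$ decoding is injective. Finally, you do not derive the constant $8$; in the paper it is the $4$ final tokens times the $2$ from $\log_2\frac32 > \frac12$.
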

We note that there are relation sets $\mathcal{R}$ for which shortcuts with few layers may exist, but the result shows that a linear number is needed in the worst case. 
This statement expresses a \emph{width-depth tradeoff}: the product of the number of layers, bits of precision, width, and number of heads needs to grow linearly in $k$.
In particular, within a single model (i.e., fixing $d$, $H$, and $p$), the analysis predicts that, as $k$ grows, more and more layers need to be involved in the hop-by-hop retrieval, as we found empirically (Figure~\ref{fig:intervention_heatmaps}).
We also note that existing results \citep{chen2024theoretical} are not applicable  to our $k$-hop task (Appendix~\ref{app:theory-discussion}).
Further empirical evidence supports our theoretical prediction (see Appendix \ref{app:depth_effect}).

\section{Curriculum learning mitigates the data requirement, but doesn't solve it} \label{sec:e3}

\begin{figure}[!t] 
  \centering
  \begin{subfigure}[b]{0.46\textwidth}
    \includegraphics[width=\linewidth]{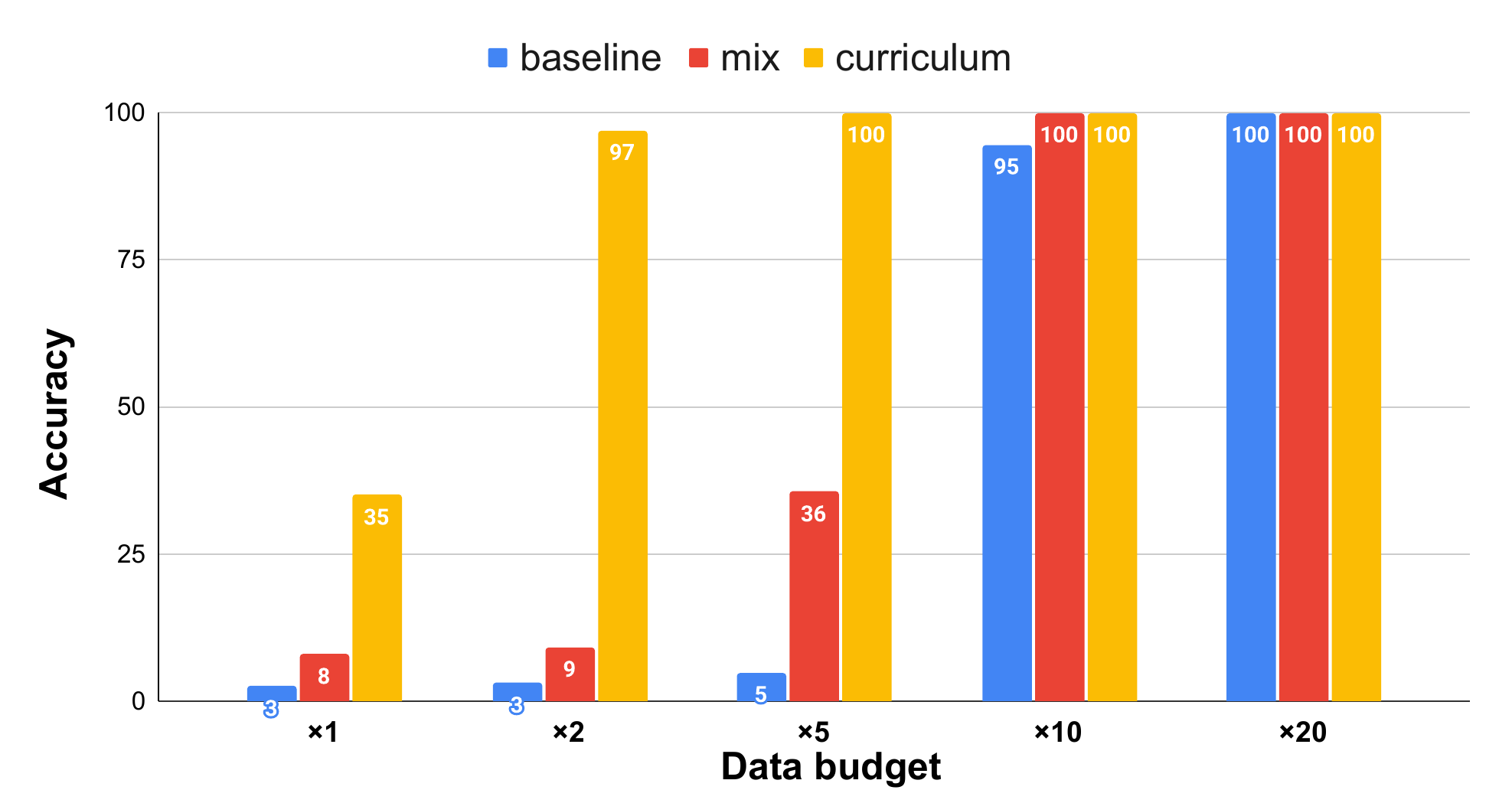}
    \caption{\threehopl}
    \label{fig:hop3_large_mix_cl}
  \end{subfigure}
  \hfill
  \begin{subfigure}[b]{0.46\textwidth}
    \includegraphics[width=\linewidth]{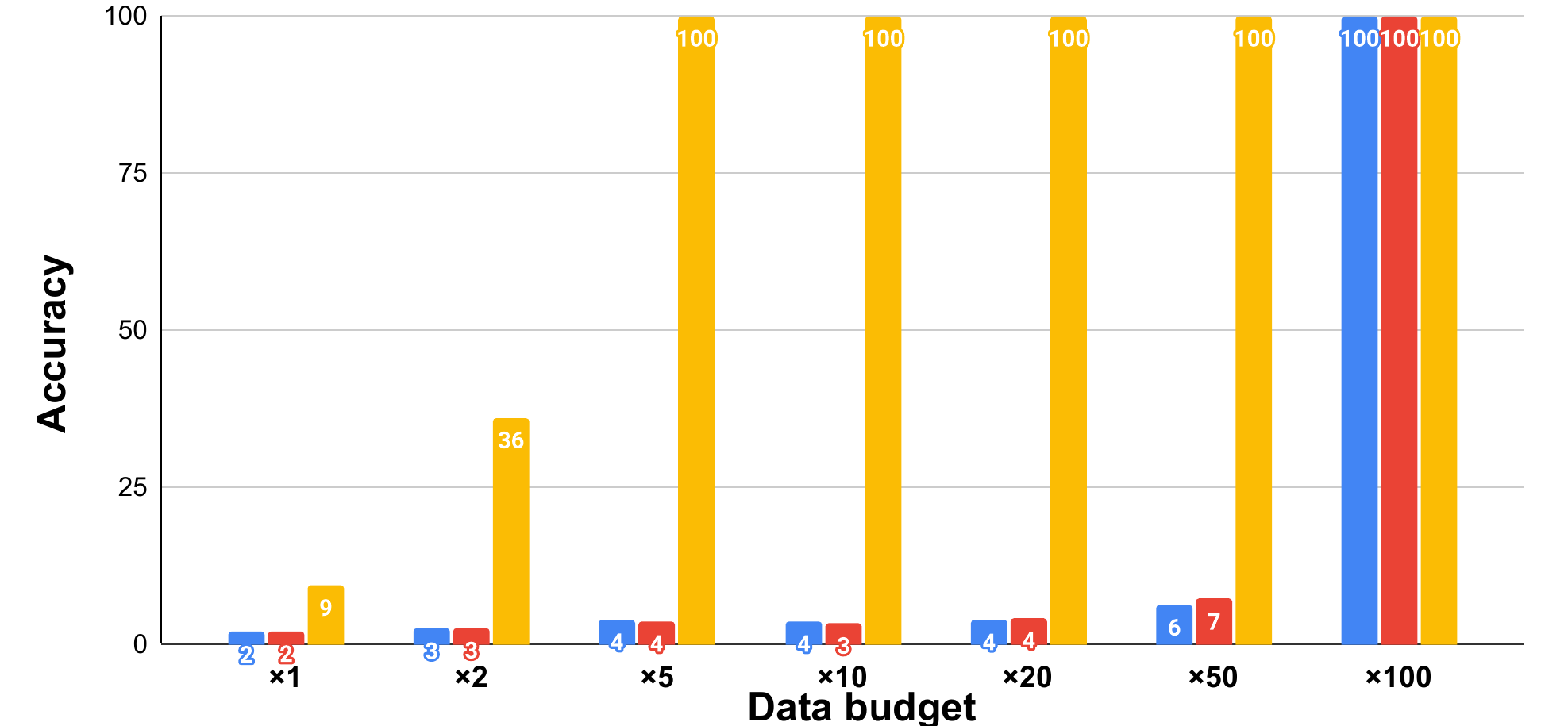}
    \caption{\fourhopl}
    \label{fig:hop4_large_mix_cl}
  \end{subfigure}
  \caption{Model performance on \hopl\ datasets with mixed learning and curriculum learning.
  Curriculum learning enables the model to solve the $4$-hop task with a $\times5$ training budget, compared to the $\times100$ required by both the baseline and mixed learning setups.}
  \label{fig:large_mix_cl}
\end{figure}

Finally, we study training strategies to improve the data budget issue.  
Models in Section \ref{sec:e1} were trained solely on $k$-hop task, but  $i$-hop ($i < k$) questions should also be available in realistic setups.
By exploiting such easier questions as additional training data for $k$-hop task, we demonstrate that curriculum learning significantly mitigates the exponential growth issue, though not eliminate the increase of data budget as $k$ increases.

\subsection{Experiment setup}
We use the same GPT-2 architecture as in Section~\ref{sec:exp_setup} and compare two strategies: mixed learning and curriculum learning \citep{bengio2009curriculum}.

\paragraph{Mixed learning.} 
We construct the training set by combining reasoning questions from both lower-hop and $k$-hop tasks. 
For instance, the \fourhop\ training set contains a mix of \twohop, \threehop, and \fourhop\ questions, along with all relevant entity profiles. 
Lower-hop questions are generated using the same entity profiles as the target task. 
We vary the $k$-hop training budget using the same scaling factors as in Section~\ref{sec:exp_setup}, while keeping the amount of lower-hop data fixed (see Appendix~\ref{app:dataset:cl} for dataset details).

\paragraph{Curriculum learning.} 
Training in curriculum learning is split into multiple stages, where each stage progressively introduces harder reasoning tasks. 
For a $k$-hop task, training proceeds in $k{-}1$ stages: the first stage uses only \twohop\ questions, the second stage includes both \twohop\ and \threehop, and so on.
We use the same lower-hop data as in the mixed learning setup and ensure that total training steps are equal across both strategies. 
See Appendix \ref{app:cl_training} for training details.

\paragraph{Test set.}
To avoid shortcut solutions (e.g., where lower-hop queries appear as subcomponents of the $k$-hop query), we generate test sets such that such overlaps do not exist using rejection sampling. 
The test set size remains 3000 instances, consistent with previous experiments.

\subsection{Results.}
Figures ~\ref{fig:large_mix_cl} shows the results for \hopl, and the same pattern holds for \hops\ (see Appendix \ref{app:detailed_results:hops} for results).
We compare mixed learning, curriculum learning, and the baseline model trained only on the target $k$-hop dataset from Section~\ref{sec:exp_setup}.

\paragraph{Curriculum learning significantly reduces the required data budget.}
Notably, curriculum learning yields the most significant improvement.
For example, perfect accuracy on \fourhop\ tasks is achieved with only a $\times5$ budget, compared to $\times100$ in the baseline.
In contrast, simply mixing all available data provides only modest gains.
This demonstrates that presenting easier reasoning tasks before harder ones is a highly effective strategy for improving data efficiency.

\paragraph{Curriculum learning builds circuits gradually.}
We attribute this effectiveness of curriculum learning to a stepwise build-up of circuits: As we show in Appendix \ref{app:mi:cl}, mechanisms retrieving lower-hop entities (e.g., $1$-hop) emerge in the early training stages; subsequent stages then build upon these established circuits to learn more complex reasoning tasks.
While baseline models have to construct a full circuit for $k$-hop reasoning at once, curriculum learning enables $1$-hop circuits to emerge in shallower layers in the first stage, with later stages developing circuits for $2$-hop and $3$-hop entities on top of these.

\paragraph{Curriculum learning does not completely solve the data growth issue.}
Despite the effectiveness of curriculum learning strategy, it does not completely eliminate the growth of data budget.
For example, curriculum learning requires $\times2$ budget for \threehop\ task and $\times5$ for \fourhop\ task, indicating the challenge of \hop\ implicit reasoning for LMs.

\section{Conclusion} \label{sec:conclusion}

Our work investigates whether language models can learn implicit multi-hop reasoning.
We provide a nuanced answer through controlled $k$-hop reasoning datasets using GPT2-style language models.
On the one hand, our findings demonstrate that language models can indeed learn \hop\ reasoning through sequential lookup of intermediate bridge entities layer by layer. 
However, this capability comes at a cost: as $k$ increases, the training data budget grows exponentially, and the model depth must scale linearly.
Furthermore, while curriculum learning mitigates the data budget growth, it does not eliminate the growth trend. 
Together, we present a comprehensive view of the potential and limitations of LMs in implicit reasoning, underscoring the inherent trade-offs between task complexity, data requirements, and model depth.

\section{Limitations} \label{sec:limitation}
We limit our study to implicit reasoning tasks using synthetic datasets generated based on predefined templates.
Applying the same analysis to realistic datasets is challenging due to the difficulty of collecting complex multi-hop questions (e.g.\ \fourhop\ questions) and corresponding facts.
Due to computational budget constraints, we also restrict our experiments to \hop\ tasks with $k < 5$.

Additionally, our experiments primarily rely on randomly initialized small language models (GPT-2 small).
While we also observe that the data budget issues persist for pretrained models (e.g.\ pretrained GPT-2) and larger models (GPT-2 medium and large with up to 770M parameters), we do not extend our analysis to models with greater parameter sizes.

\section*{Acknowledgments}
We thank Lijie Chen for fruitful discussions, and Ji-Ung Lee and Katharina Stein for their feedback on the paper.



\bibliography{anthology,custom}
\appendix



\section{Details for Theoretical Results}\label{app:theory}

\subsection{Proof of Theoretical Bound}

\begin{theorem}[Restated from \ref{thm:main-theorem}]
    Consider a causal transformer operating in $p$ bits of precision, with $d$ hidden units, $H$ heads and $L$ layers. Assume it performs $k$-hop reasoning over $\mathcal{E}$ and $\mathcal{R}$ as defined above.
    Assume further that the attention pattern does not depend on $e$.
    If $k \leq |\mathcal{E}|-2$, then, for some  $\mathcal{R}$,
    \begin{equation}
    L \geq \frac{k}{8pdH} 
\end{equation}
\end{theorem}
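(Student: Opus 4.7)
The plan is to argue via an information-theoretic reduction. The crucial leverage is the assumption that the attention pattern does not depend on the input entity $e$: under this assumption, the flow of $e$-dependent information from the token position carrying $e$ to the position producing the answer is tightly constrained by the per-layer bandwidth of the attention and value projections, giving a bits-per-layer budget to compete against the intrinsic complexity of the $k$-hop composition.

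First, I would design an adversarial $\mathcal{R}$ for which the family of $k$-fold compositions $r_k\circ\cdots\circ r_1$, as the relations $r_i$ range over $\mathcal{R}$, realizes many mutually distinguishing functions $\mathcal{E}\to\mathcal{E}$. A clean choice is to let $\mathcal{R}$ consist of carefully chosen permutations, arranged so that distinguishing the correct answer on at least one instance requires recovering $\Omega(k)$ bits of information about $e$ at the answer position. The hypothesis $k\le|\mathcal{E}|-2$ leaves exactly enough room for such a construction, for instance by chaining bit-flip-like permutations along a path of length $k$ through $\mathcal{E}$, which forces the query map to act as a non-trivial $k$-bit function of $e$ when restricted to a well-chosen subset.

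Next, I would show that the transformer's computation at the answer token can be simulated by a protocol that transmits at most $O(LpdH)$ bits of $e$-dependent information. Because the attention weights are $e$-independent, at each layer the answer position's residual stream can absorb at most $O(pdH)$ new $e$-bits: each of the $H$ heads contributes at most $O(pd)$ bits of $e$-dependent value read-out, and the MLP following the attention, being a deterministic map, cannot inject new information about $e$. Iterating over $L$ layers and comparing with the $\Omega(k)$ lower bound from the adversarial construction yields $LpdH \ge k/8$, which is the claim.

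The main obstacle I anticipate is formalizing the per-layer information-flow budget in the presence of residual connections and nonlinear MLPs: one needs an invariant---say a mutual-information quantity or a combinatorial count of distinguishable residual-stream states---that provably grows by at most $O(pdH)$ per layer across \emph{all} positions jointly, so that $e$-dependence cannot accumulate faster than linearly in $L$. A secondary difficulty is calibrating the adversarial $\mathcal{R}$ to produce a genuinely $\Omega(k)$-bit lower bound rather than only $O(\log k)$; this is what pins down the constant $1/8$ and clarifies the role of the condition $k\le|\mathcal{E}|-2$.
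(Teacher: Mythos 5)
Your high-level plan matches the paper's: a communication-complexity lower bound combined with an adversarial choice of $\mathcal{R}$ realizing exponentially many distinct $k$-fold compositions. However, the middle of your argument has the information flow pointing the wrong way, and the adversarial construction as sketched may not work.

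\textbf{Direction of the bottleneck.} You bound \emph{$e$-dependent} information reaching the answer position, arguing that the query map must be ``a non-trivial $k$-bit function of $e$.'' But the answer position sits causally \emph{after} the token $e$ and can read it directly; moreover, the output of the composed map is a single entity, i.e.\ at most $\log_2|\mathcal{E}|$ bits, independently of $k$, so there is no $\Omega(k)$-bit function of $e$ to recover. The correct bottleneck is about information concerning the \emph{composed function} $r_k\circ\cdots\circ r_1$. The relation tokens precede $e$; by causal masking their activations are $e$-independent, and because the attention pattern is $e$-independent, the attention-weighted reads from the relation span into the answer span are also $e$-independent. In the two-party game where Alice holds $r_1,\dots,r_k$ and Bob holds $e$, Alice precomputes these aggregates (one per head, per layer, per final token) and sends roughly $4LpdH$ bits; from these plus $e$, Bob recovers the answer. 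Since this works for every $e$, the message determines the composed function, so $4LpdH \ge \log_2$ of the number of realizable compositions. Your framing inverts who needs to learn what.

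\textbf{Per-layer accumulation.} You write that the answer position's residual stream ``absorbs at most $O(pdH)$ new $e$-bits'' per layer, summed over $L$ layers. A single residual stream is $d$-dimensional at $p$-bit precision and so holds at most $pd$ bits at any time, regardless of $L$; nothing accumulates to $LpdH$ inside it. The factor $L$ enters because the simulation requires a fresh aggregate from Alice's span at each of the $L$ layers, not because information piles up in a single state vector. This is a one-shot message counting argument, not a mutual-information martingale per layer, which also removes your anticipated obstacle about MLPs and residual connections.

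\textbf{Adversarial $\mathcal{R}$.} The quantity that must grow exponentially in $k$ is $|\{r_k\circ\cdots\circ r_1 : r_i\in\mathcal{R}\}|$. Taking $\mathcal{R}$ to be permutations is risky: their compositions live inside the finite symmetric group, and you would owe a proof that length-$k$ words over your generators produce exponentially many distinct group elements, which is not automatic. The paper instead uses a two-element $\mathcal{R}=\{f,g\}$ with a designated sink $0$: $f$ cyclically shifts the non-sink entities, and $g$ is idempotent, collapsing one distinguished entity to the sink. Distinct $gg$-free words of length $k$ yield distinct compositions, because one can read off exactly which entities get sent to $0$ from the positions where $g$ followed an $i$-fold $f$; the count of such words is Fibonacci, hence at least $(3/2)^k$, and $k\le|\mathcal{E}|-2$ is exactly what keeps the cycle long enough for this readout. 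This gives $4LpdH \ge k\log_2\tfrac32 > k/2$, i.e.\ $L \ge k/(8pdH)$. Your permutation-based sketch would need a comparable explicit certificate of exponential growth, which you have not supplied.
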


\begin{proof}
Recall that the input has the form
\begin{center}
\textit{Who is the $r_k$  of the  $r_{k-1}$ \dots $r_{2}$ of the  $r_1$  of   $e$ ? Answer : }
\end{center}
Our argument is based on communication complexity.
We consider a communication game where Alice holds $r_1, \dots, r_k$, and Bob holds $e$.
Due to causal masking, Alice can compute the transformer's activations on all tokens ``Who is the $r_k$ of the \dots $r_1$ of'' without receiving any information from Bob.
In order to compute activations on then final tokens ``$e$ ? Answer :'' (and thus the prediction), Bob requires access to the outputs of attention heads on these tokens. Because the attention patterns are assumed to be independent of the query $e$, Alice can simple, for each head at the four tokens ``$e$ ? Answer :'' provide the activations within the span known to Alice weighted with the attention weights.
Thus, a total of $4HL$ such activation vectors is sufficient. Furthermore, each of these activations can be encoded with $pd$ bits. Overall, thus, Bob can compute the output with access to only $4LpdH$ bits.

That is, there a way to compress the composed function $r_k \circ  \dots \circ r_1$ into $4LpdH$ bits.
Hence, $2^{4LpdH}$ upper-bounds the cardinality of such possible functions:
\begin{equation}\label{eq:comm-lower-bound}
    4LpdH \geq \log_2 \left| \{ r_k \circ  \dots \circ r_1 : r_1, \dots, r_k \in \mathcal{R} \}\right|
\end{equation}
We note that, in general, the right-hand-side could be small: for instance, if $\mathcal{R}$ just contains the identity function, then the set of $k$-fold composed functions will also just contain the identity function (since its composition with itself again equals itself). To conclude the theorem, the remaining problem is now to show that there is a way of choosing $\mathcal{R}$ for which the right-hand-side scales with $k$.

We arbitrarily label the elements of $\mathcal{E}$ as $\{0, x_0, x_1, \dots, x_{n-1}\}$, and define $\mathcal{R} = \{f,g\}$ where:
\begin{align*}
f(0) =& 0 \\
    f(x_i) =& x_{(i+1) \% n} \\
    g(0) =& 0 \\
    g(x_0) =& 0 \\
    g(x_i) = & x_i\ \  (i>0)
\end{align*}
Intuitively, (1) there is a special ``sink'' entity 0, (2) $f$ cyclically shuffles the non-sink entities, (3) $g$ maps the first entity in the order to the sink entity.
We now consider all words over $f,g$ of length $k$ where $gg$ does not occur (recall that, by assumption, $k \leq |\mathcal{E}|-2 = n-1$).
The number of such words is exponential in $k$; indeed, it is at least $\left(\frac{3}{2}\right)^{k}$.\footnote{Indeed, it equals the Fibonacci number $F_{k+2}$. For large $k$, this is lower-bounded by $\left(\frac{3}{2}\right)^{k+2}$; to make it valid even for small $k$, it is sufficient to instead lower-bound by $\left(\frac{3}{2}\right)^{k}$.} 
Each such word, interpreted as a composition, generates a different transformation $\mathcal{E} \rightarrow \mathcal{E}$.\footnote{We note the connection to the general fact that every finite semigroup can be embedded into a finite semigroup generated by an idempotent and a nilpotent (note that $f$ is nilpotent and $g$ is idempotent), proven using a similar construction in Theorem 1.1 of \citet{higgins2017embedding}.}
Indeed, to show this, we simply note that such a composition maps $x_i$ to $0$ if and only if $g$ was applied immediately after the $i+1$-th application of $f$.
Thus, when $k \leq |\mathcal{E}|-2$, we have lower-bounded the right-hand-side of (\ref{eq:comm-lower-bound}) as $ k \cdot \log_2 \frac{3}{2} > \frac{k}{2}$.
The theorem then follows by rearranging (\ref{eq:comm-lower-bound}).    
\end{proof}

\subsection{Discussion}\label{app:theory-discussion}

\paragraph{Related Work}
\citet{chen2024theoretical} prove a lower bound on $L$ for causal transformers solving a more complicated kind of  composition task, which composes functions taking \emph{two} arguments. The input provides both (i) a sequence of functions, and (ii) a sequence of entities serving as the second argument, with the output
\begin{equation}
    z_{l+1}(w_l, z_l(w_{l-1}, z_{l-1}( \dots z_2(w_1,i_1))))
\end{equation}
where $z_1, \dots, z_{l_1}$ are two-argument functions, $i_1$ can be viewed as an input entity (similar to the query in our $k$-hop task), and---crucially---$w_1, \dots, w_l$ serve as additional arguments to the two-argument functions.
For this more complicated task, \citet{chen2024theoretical} prove a depth-width tradeoff.
Unlike our result, theirs does not make any assumption on the attention patterns, however, it is specifically proven for this more complicated task.
Intuitively, separately presenting the functions $z_1, \dots, z_{l_1}$ from the entities $w_1, \dots, w_l$ serving as their second argument might play a key role in making the task challenging enough to enable the theoretical analysis in \citet{chen2024theoretical}.
Hence, it appears to remains open if such a bound can also be proven for a task directly matching $k$-hop reasoning (Figure~\ref{fig:dataset_example}). 

Another line of work has shown limitations of \emph{one-layer} transformers in performing function composition \citep{peng2024limitations, kozachinskiy2025strassen}; this is consistent with our evidence that $k$-hop tasks require increasing numbers of layers, but does not bound how many layers are needed.

\paragraph{Bounds from $NC^1$-hardness}
As transformers can be simulated in $TC^0$ \citep[e.g.][]{merrill-sabharwal-2023-parallelism, strobl2023average}, some work has obtained bounds conditional on standard complexity conjecture $TC^0 \neq NC^1$. Assuming this conjecture, transformers generally cannot solve $k$-hop composition unless the number of layers increases as $k$ increases; as an example, consider $\mathcal{E}$ to be $\{1,\dots, 5\}$, and $\mathcal{R}$ a generating subset of the alternating permutation group $A_5$; then solving $k$-hop composition is $NC^1$-hard and predicted to not be feasible for transformers.
However, due to the difficulty of proving lower bounds for $TC^0$, this line of reasoning does not provide precise information about how quickly exactly $L$ needs to grow with $k$.

\paragraph{Role of attention pattern}
Theorem~\ref{thm:main-theorem} applies in the case where attention patterns do not depend on the input entity $e$ (in fact, they might still depend on $r_1, \dots, r_k$). Our proof strategy makes use of this assumption, because it limits the amount of information that any individual attention head at the final positions can obtain about $r_1, \dots, r_k$. It remains open if this assumption can be relaxed. Intuitively, it does not seem clear how changing attention patterns could make the task easier. However, formally proving lower bounds for multi-layer transformer without either constraining attention patterns (as we do) or considering a more complex task (as done in \citet{chen2024theoretical}) remains challenging; we expect that further technical tools will be needed to overcome these challenges.

\section{Effect of more powerful models} \label{app:scaleup}
Section \ref{sec:e1} only presents results for our randomly initialized GPT-2 small model.
Would the training data budget still grows in an exponentially way as the $k$ increases, even with more powerful language models?
We investigate this question by applying the same experiments in Section \ref{sec:base_results} with two other model setups: finetuning and scaling up model parameters.
We only report 1-run results for all experiments in this section.

\subsection{Finetuning}
For finetuning setup, we finetune a pretrained language model on the same training set in Section \ref{sec:e1} and evaluate it on the same test set. 
Here we start with the pretrained GPT-2 small model \citep{radford2019language}, and use the same hyperparameters as for our randomly initialized GPT-2.
Note that the pretrained GPT-2 adopts a learned positional embedding \citep{vaswani2017attention} instead of RoPE \citep{su2024roformer}, and thus we cannot directly tell the effect of pretraining compared to non-pretrained model.
Here we only use this experiment to confirm that the significant increase of data budget still holds for pretrained models.

\begin{table*}[t]
  \centering
  \small
  \setlength{\tabcolsep}{4pt}
  \begin{tabular}{@{}ll|ccccccc@{}}
    \toprule
    & & \multicolumn{7}{c}{Training data budget} \\
    \multicolumn{2}{c}{Dataset}  & ×1 & ×2 & ×5 & ×10 & ×20 & ×50 & ×100 \\
    \midrule
    \multirow{3}{*}{\hops}
      & 2-hop & \asbar{93.7} & \dasbar{100} & \dasbar{100} & \dasbar{100} & \dasbar{100} & \dasbar{100} & \dasbar{100} \\
      & 3-hop & \asbar{6.6}  & \asbar{8.2} & \asbar{45.4} & \asbar{100} & \dasbar{100} & \dasbar{100} & \dasbar{100} \\
      & 4-hop & \asbar{5.1}  & \asbar{6.5} & \asbar{6.9} & \asbar{8.4} & \asbar{23.8} & \asbar{100} & \asbar{100} \\
    \midrule
    \multirow{3}{*}{\hopl}
      & 2-hop & \asbar{100} & \dasbar{100} & \dasbar{100} & \dasbar{100} & \dasbar{100} & \dasbar{100} & \dasbar{100} \\
      & 3-hop & \asbar{2.8}  & \asbar{2.5} & \asbar{3.9} & \asbar{87.5} & \asbar{100} & \dasbar{100} & \dasbar{100} \\
      & 4-hop & \asbar{2.3}  & \asbar{3}   & \asbar{4.1} & \asbar{4.2} & \asbar{3.9} & \asbar{24.6} & \asbar{100} \\
    \bottomrule
  \end{tabular}
  \caption{Accuracy of finetuned GPT-2 small models on \hops\ and \hopl\ datasets with different training data budgets.}
  \label{tab:gpt2_pretrained}
\end{table*}

Table \ref{tab:gpt2_pretrained} presents the results of pretrained GPT-2 small models.
Overall, the data budget still exponentially grows as the $k$ value increases. 
On \hopl\ the model needs $\times10$ budget for \threehop task and $\times100$ for \fourhop, which is the same as our randomly initialized transformer.
The pretrained model achieves lower accuracy on \hops\ datasets, e.g.\ only $93.7\%$ accuracy on \twohop\ task.
Nonetheless, the model still learns to perfectly solve \hops datasets with enough data budget, e.g.\, for \threehop, model accuracy gets significantly improved accuracy at $\times5$ budget and reaches $100\%$ at $\times10$ budget.
We consider the lower accuracy here is likely due to the lack of hyperparameter optimization and use of better position encoding.  

\subsection{Scaling up the model size.}

\begin{table*}[t]
  \centering
  \small
  \setlength{\tabcolsep}{4pt}
  \begin{tabular}{@{}l l | c c c c c c c@{}}
    \toprule
    Model            & Dataset    & \multicolumn{7}{c}{Training data budget} \\
    \cmidrule(lr){3-9}
                     &        & ×1   & ×2   & ×5    & ×10  & ×20   & ×50   & ×100 \\
    \midrule
    \multirow{3}{*}{GPT-2 Medium}
      & \twohops  & \asbar{100} & \dasbar{100} & \dasbar{100} & \dasbar{100} & \dasbar{100} & \dasbar{100} & \dasbar{100} \\
      & \threehops  &  \asbar{5.6} &  \asbar{13.9} &  \asbar{99.9} &  \asbar{100} &  \dasbar{100} &  \dasbar{100} &  \dasbar{100} \\
      & \fourhops  &  \asbar{5}   &   \asbar{5.8} &   \asbar{8}   &  \asbar{10.2}&  \asbar{99.8}&  \asbar{100} &  \asbar{100} \\
    \midrule
    \multirow{3}{*}{GPT-2 Large}
      & \twohops  &   \asbar{100}          &     \dasbar{100}        &  \dasbar{100}           &    \dasbar{100}         &      \dasbar{100}       &     \dasbar{100}        &      \dasbar{100}       \\
      & \threehops  &       \asbar{6.5}      &       \asbar{22.0}      &        \asbar{100}     &     \asbar{100}        &   \dasbar{100}          &   \dasbar{100}          &    \dasbar{100}         \\
      & \fourhops  &      \asbar{4.6}       &    \asbar{5.6}         &        \asbar{7.9}     &      \asbar{11.0}       &    \asbar{99.6}         &    \asbar{100}         &     \asbar{100}        \\
    \bottomrule
  \end{tabular}
  \caption{Accuracy of GPT-2 Medium and Large on \hops\ datasets with different training data budgets. Empty cells indicate that the data budget exceeds the number of available questions possible to generate.}
  \label{tab:gpt2_comparison}
\end{table*}

We also evaluate setups where we scale up the number of model parameters.
\citet{kaplan2020scalinglawsneurallanguage} demonstrates that larger model size is crucial to gain high performance, especially the depth of transformer layers \citep{fagnou-etal-2024-chain,ye2024physics},  and we want to investigate if larger models address the data budget issue.
Here we experiment with same architecture described in Section \ref{sec:exp_setup} (i.e.\ GPT-2 with RoPE), and we set hyperparameters of architectures (e.g.\ number of layers, attention heads, etc.) according to the GPT-2 medium 
\footnote{https://huggingface.co/openai-community/gpt2-medium}
and large model
\footnote{https://huggingface.co/openai-community/gpt2-large}
configurations.
We randomly initialize the model and train and evaluate it on the \hops\ datasets in Section \ref{sec:exp_setup} from scratch.
Hyperparameters for training are the same as Section \ref{sec:exp_setup}.

Table \ref{tab:gpt2_comparison} reports the results of such larger models. 
For both GPT-2 medium and large sized models, the growth of data budget issue still persists. 

\section{Effect of model depth} \label{app:depth_effect}
The analysis in Section \ref{sec:theoretical_analysis} theoretically proves that the depth of the model needs to grow linearly as the value $k$ increases.
In this section, we further provide empirical evidence to support our theory by showing that a transformer with fewer layers struggles with $k$-hop reasoning.

We conducted preliminary experiments in which we vary the model depth from 2 to 5 layers and use the \hops\ datasets. 
We adopt the training data budgets that are sufficient for a 12-layer GPT-2-small model to succeed (see Table \ref{tab:baseline}): 
$\times1$ for 2-hop, $\times5$ for 3-hop, and $\times20$ for 4-hop. 
All other hyperparameters and training configurations are unchanged, except for the model depth (number of transformer layers).

\begin{table}[h]
\centering
\footnotesize 
\begin{tabular}{lcccc}
\toprule
Dataset & 2 layers & 3 layers & 4 layers & 5 layers \\
\midrule
\footnotesize \twohops & 31 & \textbf{96} & \textbf{97} & \textbf{100} \\
\footnotesize  \threehops & 13 & 55 & \textbf{100} & \textbf{100} \\
\footnotesize  \fourhops & 25 & 38 & 83 & \textbf{95} \\
\bottomrule
\end{tabular}
\caption{Test accuracy with varying model depth. Accuracies above 90\% are boldfaced.}
\label{tab:depth_effect}
\end{table}

We report test accuracy in Table \ref{tab:depth_effect}. 
Noticeably, the required model depth grows as the hop number increases, which is consistent with what our theoretical analysis predicts.

\section{Dataset details} \label{app:dataset}

\subsection{Datasets in Section \ref{sec:e1}} \label{app:dataset:baseline}

\begin{table*}[!htp]
  \centering
  \scriptsize
  \setlength{\tabcolsep}{4pt}
  \begin{tabularx}{\textwidth}{@{}llXXXXXXX@{}}
    \toprule
    Dataset  & Hop   & ×1     & ×2     & ×5      & ×10     & ×20      & ×50       & ×100      \\
    \midrule
    \multirow{3}{*}{small}
      & 2-hop & 4 250  &   &    &    &     &      &      \\
      & 3-hop & 4 250  & 8 250  & 20 250  & 40 250  &     &      &      \\
      & 4-hop & 4 250  & 8 250  & 20 250  & 40 250  & 80 250    & 200 250      & 400 250      \\
    \addlinespace
    \multirow{3}{*}{large}
      & 2-hop & 20 500 &   &    &    &     &      &      \\
      & 3-hop & 20 500 & 40 500 & 100 500 & 200 500 & 400 500   &      &      \\
      & 4-hop & 20 500 & 40 500 & 100 500 & 200 500 & 400 500   & 1 000 500    & 2 000 500    \\
    \bottomrule
  \end{tabularx}
  \caption{Statistics of the number of training instances in each setup.}
  \label{tab:dataset_statistics}
\end{table*}

\begin{table}[!ht]
  \centering
  \scriptsize
  \setlength{\tabcolsep}{6pt}
  \begin{tabularx}{0.8\linewidth}{@{}*{4}{X}@{}}
    \toprule
    instructor   & teacher       & ruler        & advisor       \\
    supervisor   & leader        & manager      & director      \\
    patron       & mentor        & administrator& coordinator   \\
    tutor        & predecessor   & sponsor      & financier     \\
    backer       & overseer      & employer     & boss          \\
    \bottomrule
  \end{tabularx}
  \caption{Vocabulary of relation names.}
  \label{tab:relation_vocab}
\end{table}

\begin{table}[!ht]
  \centering
  \scriptsize
  \setlength{\tabcolsep}{6pt}
  \begin{tabularx}{0.8\linewidth}{@{}*{4}{X}@{}}
    \toprule
    Emil   & Gavin       & Chad        & Flora       \\
    Adam   & Addie        & Bobby      & Edwin      \\
    Gabby       & Helen        & Jeffery & Joel   \\
    Kris        & Kristen   & Lisa      & Liam     \\
    Eva       & Emma      & Dylan     & Isabella          \\
    \bottomrule
  \end{tabularx}
  \caption{Subset of vocabulary of entity names.}
  \label{tab:entity_vocab}
\end{table}

\paragraph{Namespaces of entity and relation names.}
We provide details on the entity and relation namespaces used to generate the datasets in Section \ref{sec:dataset}. 
Our dataset consists of \mathe\ entities, each with a distinct name and $N$ relations. 
We use 600 unique single-token person names (e.g., \textit{Jennifer}) and 20 single-token relation names (e.g., \textit{instructor}), generated by ChatGPT\footnote{https://chatgpt.com/}, as the namespaces for entities and relations, respectively.
The complete vocabulary of relation and a subset of entity names are provided in Tables \ref{tab:relation_vocab} and \ref{tab:entity_vocab}.
Since our main experiments use a randomly initialized language model, the specific choice of vocabulary does not influence our conclusions.

\paragraph{Top-hierarchy entity profiles.}
Entities in the top layer of Figure \ref{fig:datageneration} are not linked to any targets, making it non-trivial to generate their profiles. 
Nevertheless, we include their profiles in the training set to maintain consistency across all $k$-hop tasks with $k \in \{2, 3, 4\}$. 
In both the \twohop\ and \threehop\ tasks, answer tokens (i.e., entity names) appear in the training set as subject entities in their own profiles. 
To ensure the same holds for the \fourhop\ task, where answers correspond to top-layer entities, we generate profiles for these entities as well.
Specifically, we generate these profiles by concatenating facts in which the subject entity is the top-layer entity itself, the relation is one from Table \ref{tab:relation_vocab}, and the object entity is a single-token name sampled from an additional set of 100 person names. 
These object names are distinct from the ones used in Figure \ref{fig:datageneration}. 
Since these facts are never used in any $k$-hop question in the training or test sets, including them does not affect our results or conclusions.
  

Table \ref{tab:dataset_statistics} reports the training set sizes for each dataset configuration. 
To maintain consistency across data budget setups, we include the same set of \mathe\ entity profiles (e.g., \mathe$=250$ profiles for \hops) in each training set. 
We partition the \mathe\ entities into 5 disjoint subsets, each containing \mathe$/5$ entities, and only generate reasoning questions targeting one subset (e.g., entities in the bottom hierarchy of Figure \ref{fig:datageneration}).
Each entity profile includes \mathr\ relations (e.g., \mathr$=10$ for \hops), allowing us to generate \mathr$^2 \times$ \mathe$/5 = 5000$ questions for \twohops, of which $80\%$ are randomly selected as training instances.

\subsection{Datasets for mixed and curriculum learning} \label{app:dataset:cl}
In mixed learning, we introduce lower-hop reasoning questions as auxiliary training instances to facilitate learning more complex reasoning tasks. 
For the $3$-hop task, we add $2$-hop instances, and for the $4$-hop task, we add both $2$-hop and $3$-hop instances. 
For \hops, we include 4k $2$-hop instances as auxiliary data for \threehops, and 4k $2$-hop and 20k $3$-hop instances for \fourhops. 
For \hopl, we include 32k $2$-hop instances for \threehopl, and 32k $2$-hop and 100k $3$-hop instances for \fourhopl.
Due to computational constraints, we did not specifically tune the size of auxiliary data. 
The curriculum learning setup uses the same auxiliary instances as mixed learning.

\section{Training details} \label{app:training}
\subsection{Baseline} \label{app:baseline_training}
This section provides the model architecture and training setup used in Section \ref{sec:e1}. 
Unless stated otherwise, the same configuration is applied across all experiments in this paper.   

\begin{figure*}[!t]
  \centering
  \begin{subfigure}[b]{0.48\textwidth}
    \includegraphics[width=\linewidth]{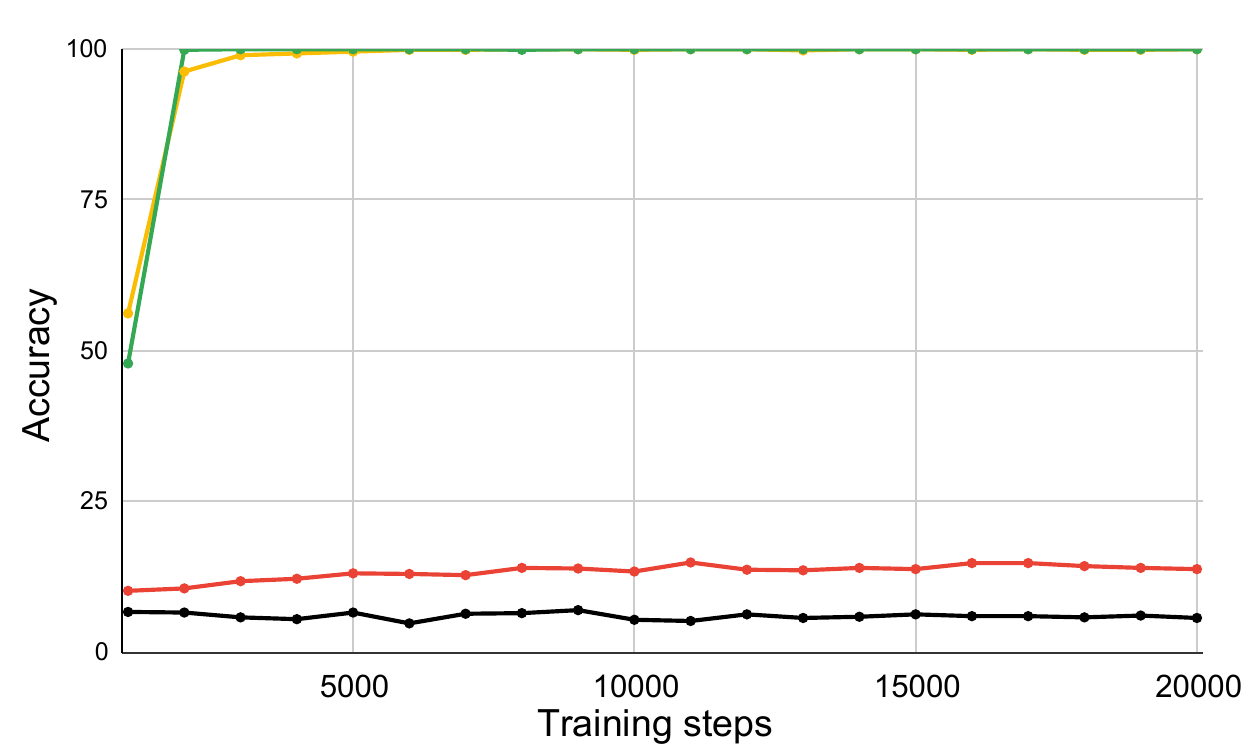}
    \caption{\threehops}
    \label{fig:hop3_small}
  \end{subfigure}
  \hfill
  \begin{subfigure}[b]{0.48\textwidth}
    \includegraphics[width=\linewidth]{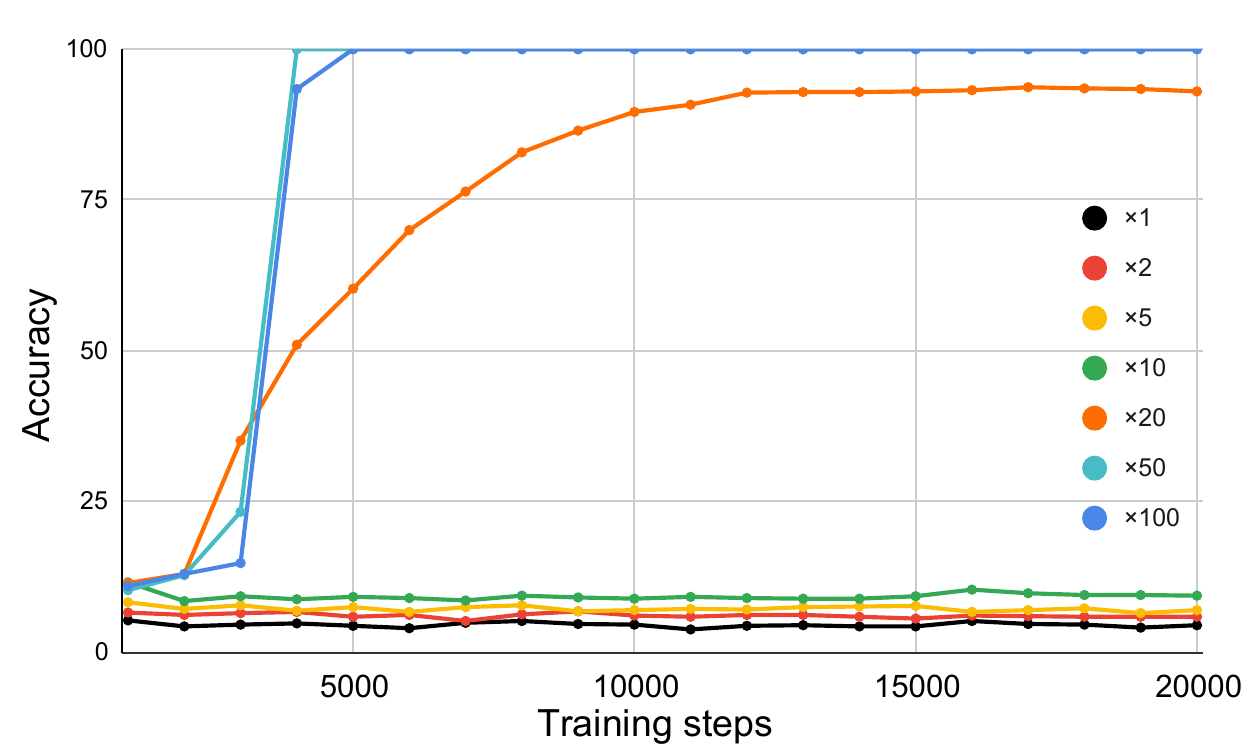}
    \caption{\fourhops}
    \label{fig:hop4_small}
  \end{subfigure}
  \caption{Model accuracy on \threehops\ and \fourhops. x-axis refers to the number optimization steps. }
  \label{fig:small_model_comparison}
\end{figure*}

\begin{figure}[!t]
  \centering
  \begin{subfigure}[b]{0.48\textwidth}
    \includegraphics[width=\linewidth]{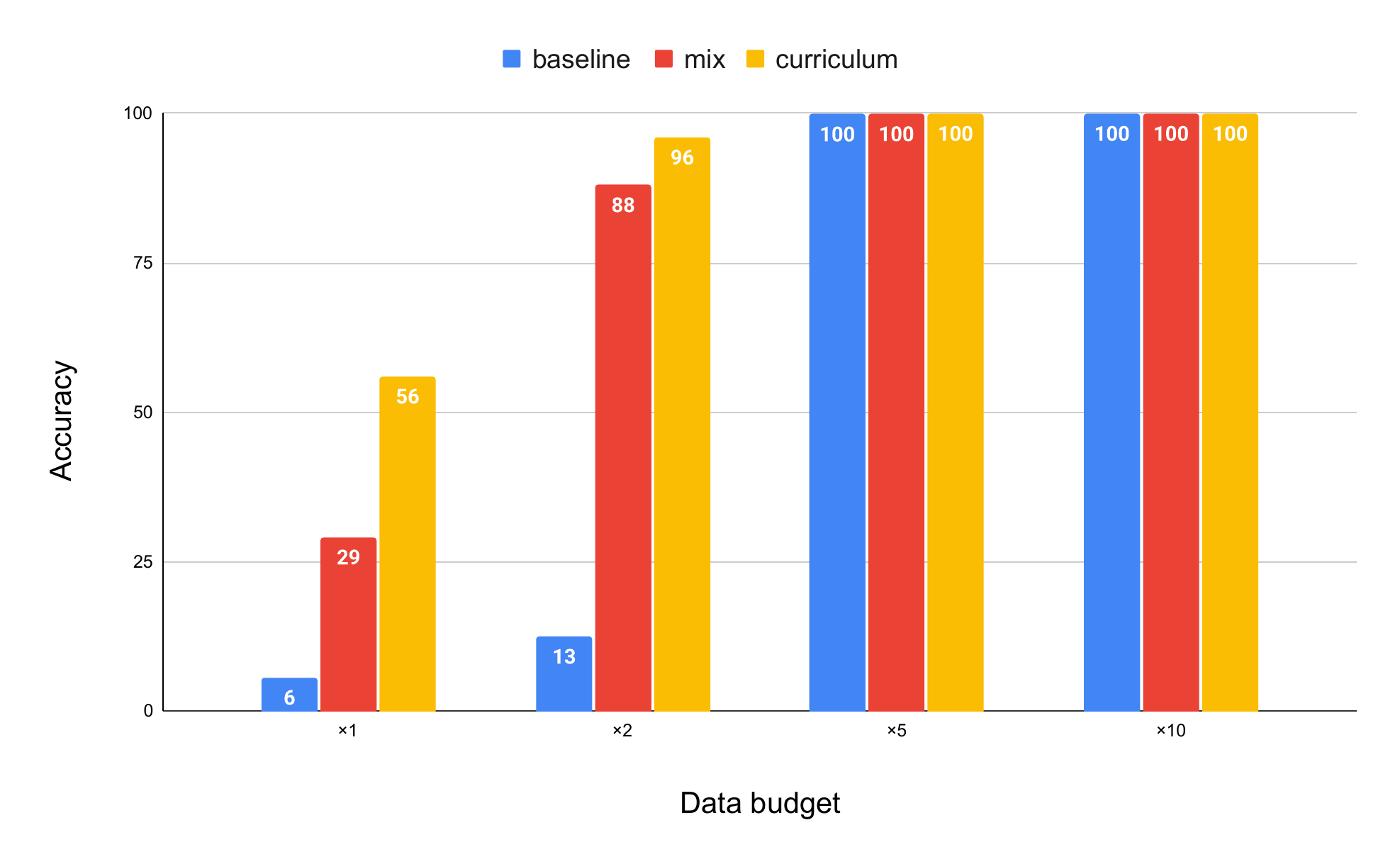}
    \caption{\threehops}
    \label{fig:hop3_small_mix_cl}
  \end{subfigure}
  \hfill
  \begin{subfigure}[b]{0.48\textwidth}
    \includegraphics[width=\linewidth]{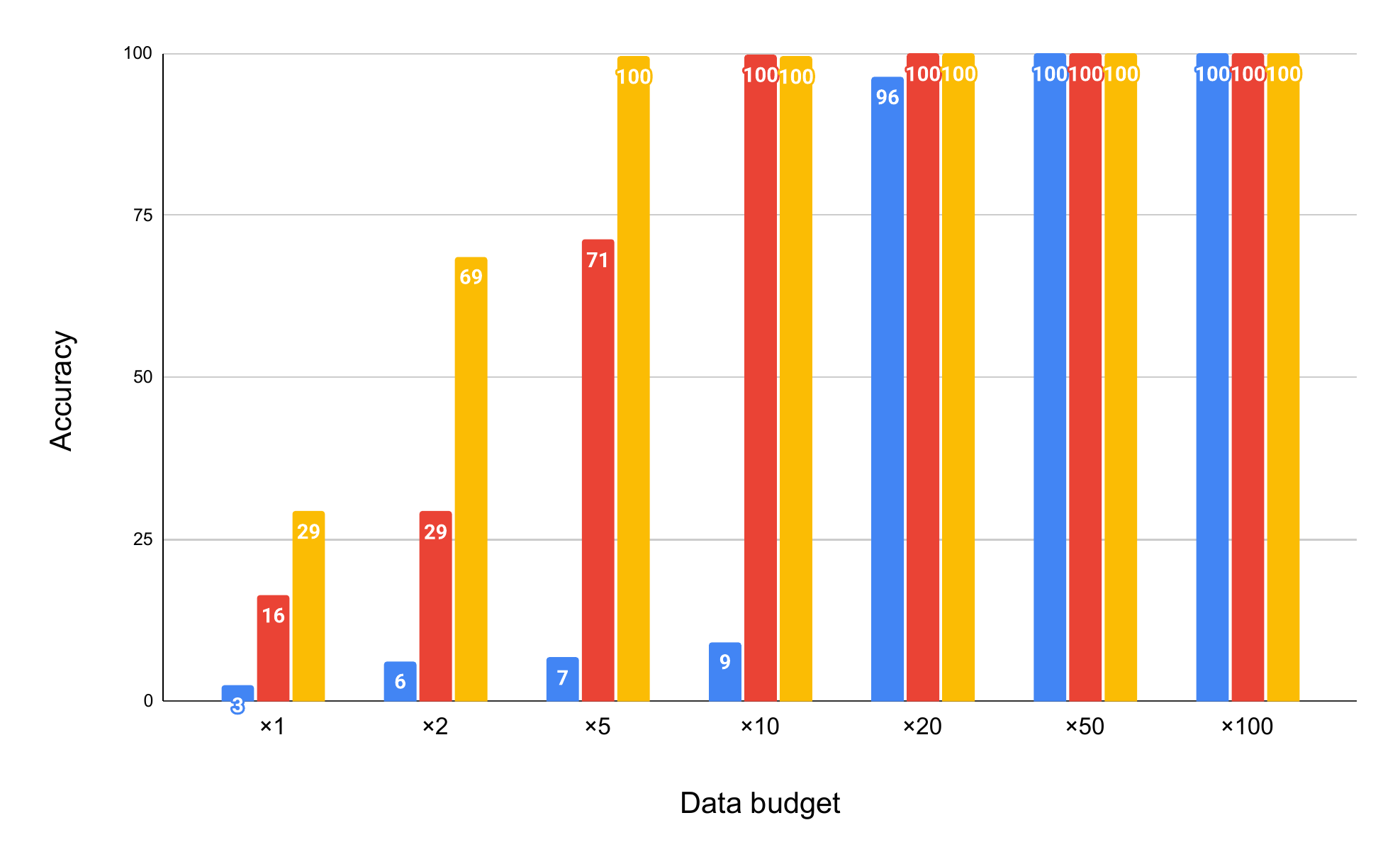}
    \caption{\fourhops}
    \label{fig:hop4_small_mix_cl}
  \end{subfigure}
  \caption{Model performance on \hops\ datasets with mixed learning and curriculum learning.}
  \label{fig:small_mix_cl}
\end{figure}

\paragraph{Model architecture.}
We adopt the GPT-2 small architecture\footnote{https://huggingface.co/openai-community/gpt2}, consisting of 12 transformer layers with 12 attention heads. 
The input embedding dimension is 768, and the context window is limited to 1024 tokens. 
Instead of absolute position embeddings used in the original Transformer \citep{vaswani2017attention}, we employ Rotary Position Embedding (RoPE) \citep{su2024roformer} to encode positional information. 
We use the default GPT-2 tokenizer and extend the vocabulary to include all entity profile names (e.g., \textit{Jennifer}), resulting in a vocabulary size of $|V| = 50,740$.

\paragraph{Training.}
The batch size is set to 512 with gradient accumulation steps of 4. 
We use the AdamW optimizer \citep{DBLP:journals/corr/KingmaB14, loshchilov2019decoupledweightdecayregularization} with the following hyperparameters: learning rate of $5e-4$, $\epsilon = 1e-6$, $\beta_1 = 0.9$, $\beta_2 = 0.999$, and weight decay of $0.1$.  
Training begins with a 1k-step warm-up phase, followed by a cosine learning rate scheduler \citep{loshchilov2016sgdr}, with a minimum learning rate set to $0.1 \times$ the initial learning rate.

Experiments are run on Nvidia A100 and H100 GPU cards (80GB). 
Each experiment is conducted on one single GPU, which takes about 8 hours for 20k optimization steps.
The implementation is based on Huggingface \citep{wolf2019huggingface} and Pytorch \citep{paszke2019pytorch}.
GPT-2 is released under the MIT License by OpenAI.

\subsection{Setup for mixed and curriculum learning} \label{app:cl_training}
The model architecture for mixed and curriculum learning experiments remains the same as the baseline configuration described in Section \ref{app:baseline_training}. 
The training setup for mixed learning also follows the baseline training setup without any modifications.  

Training in curriculum learning is divided into multiple stages, where each stage progressively introduces harder reasoning tasks. 
For a $k$-hop task, training consists of $k{-}1$ stages:  
The first stage includes only \twohop\ questions. 
The second stage adds \threehop\ questions. 
The third stage adds \fourhop\ questions to the training set (only applicable for \fourhop\ tasks).
Hence we have 2 training stages for \threehop\ task and 3 training stages for \fourhop\ task.
The maximum number of training steps for each stage across different target tasks is reported in Table \ref{tab: cl_stage}. 
Each stage employs the same learning rate scheduler and warm-up steps as in the baseline training setup to maintain consistency.
The batch size and gradient accumulation steps remain the same as in the baseline setup. 

\begin{table}[!htp]\centering
\footnotesize
\begin{tabular}{lrrrrr}\toprule
Task &Stage 1 &Stage 2 &Stage 3 &Total \\
\midrule
\threehops &10000 &10000 &- &20000 \\
\fourhops &5000 &5000 &10000 &20000 \\
\threehopl &10000 &10000 &- &20000 \\
\fourhopl &10000 &10000 &20000 &40000 \\
\bottomrule
\end{tabular}
\caption{Training steps for each training stage of curriculum learning}\label{tab: cl_stage}
\end{table}

\section{Detailed results} \label{app:detailed_results}

\subsection{Results for LMs on \hops} \label{app:detailed_results:hops}
We plot the test accuracy of LMs on \hops\ across training steps in Figure \ref{fig:small_model_comparison}.
The pattern is similar to the one observed in Figure \ref{fig:large_model_comparison}. 
Models trained with small budgets only give modest improvement over random baseline (i.e.\ $2\%$ for \hops). 
Larger budgets not only lead to higher accuracy, but also achieves this with much less training steps.

We also report \hops\ results of models trained with mixed learning and curriculum learning in Figure \ref{fig:small_mix_cl}.
Still,  we observe that curriculum learning gives the best result compared to the baseline and mixed learning.

\subsection{Standard deviation}

For each experiment reported in Section \ref{sec:e1} and \ref{sec:e3}, we made 3 runs based on different random seeds.
We report the mean and standard deviation of the test accuracy for each model in Table \ref{tab:detailed_results}.
For most results we do not observe a large standard deviation, indicating that our conclusion is robust to the randomness.
For particular runs there is a large deviation, especially when the data budget is not enough (e.g.\ model trained with curriculum learning on \fourhopl with $\times2$ budget), which gets smaller when we further add more data into the training set. 

\subsection{Log scale of data budget}

We plot the minimal data budget required to solve $k$-hop tasks on a log scale as $k$ increases. 
The data points are based on numbers in Table \ref{tab:baseline}. 
Figure \ref{fig:logscale} shows the results, confirming that the required data budget grows exponentially with $k$.

\begin{figure}
    \centering
    \includegraphics[width=\linewidth]{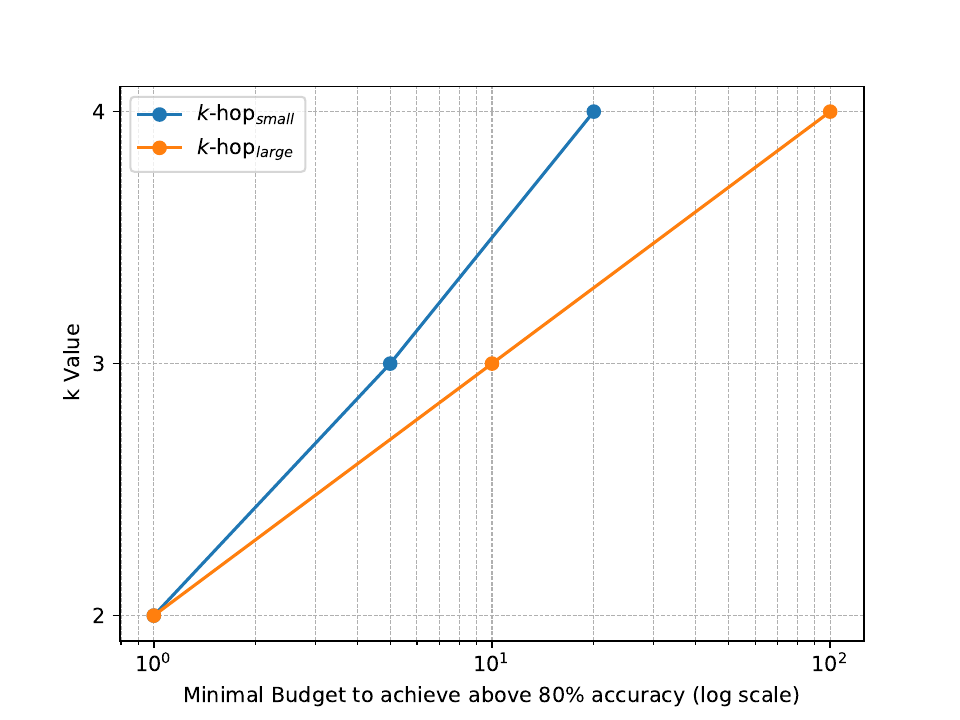}
    \caption{Minimal data budget to solve $k$-hop tasks.}
    \label{fig:logscale}
\end{figure}

\begin{figure*}[t]
    \centering
    \includegraphics[width=1.0\linewidth]{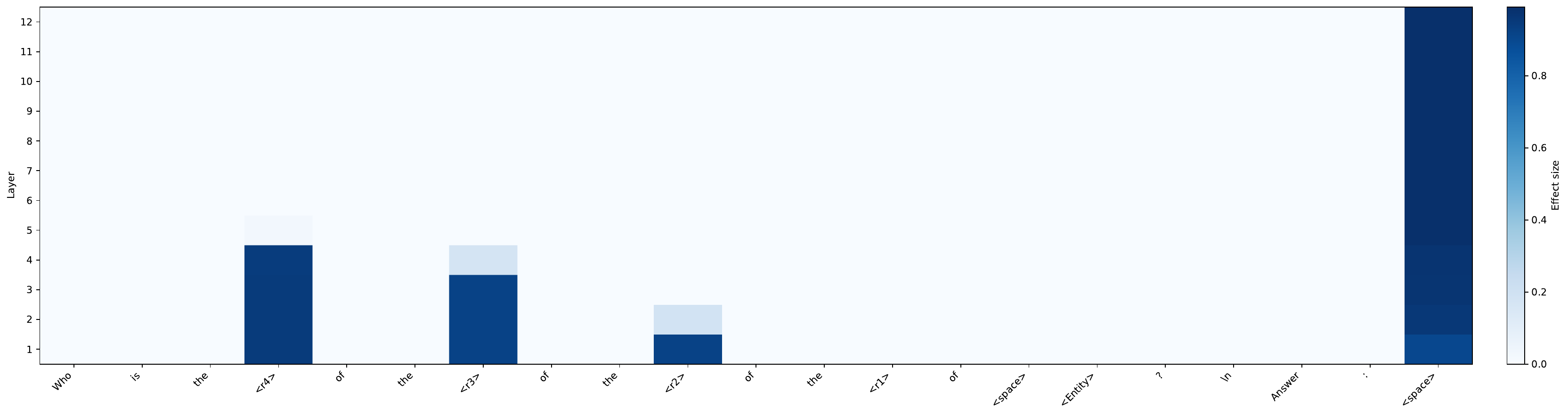}
    \caption{Results for activation patching replacing the residual stream of a particular layer across prompt tokens.}
    \label{fig:patching_across_tokens}
\end{figure*}

\begin{figure*}[t]
    \centering
    \includegraphics[width=1.0\linewidth]{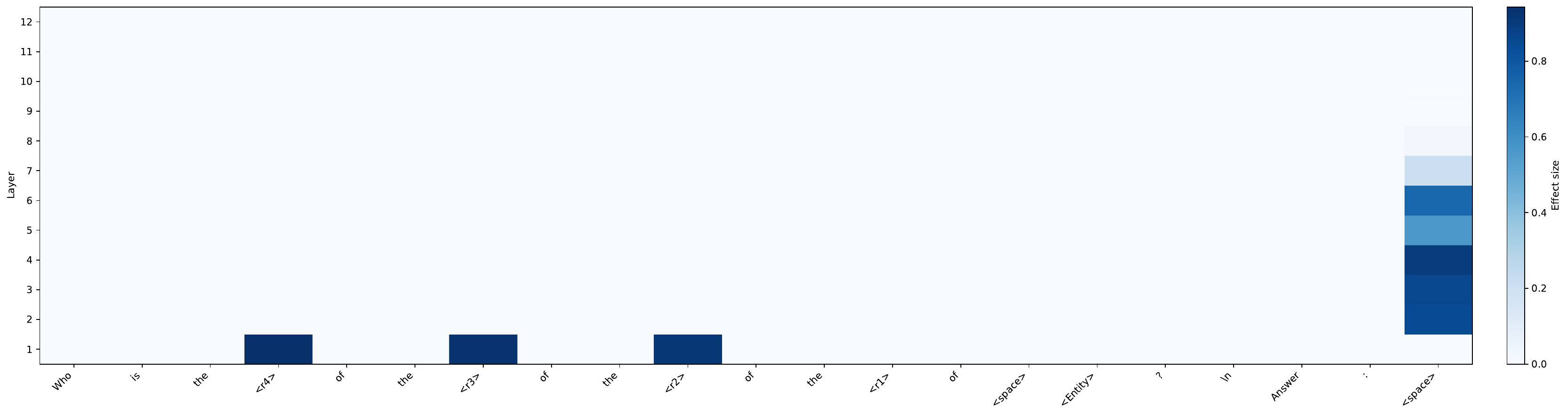}
    \caption{Results for activation patching replacing the MLP output of a particular layer across prompt tokens.}
    \label{fig:patching_across_tokens_mlp}
\end{figure*}

\section{Additional mechanistic interpretability experiments} \label{app:mi}

\subsection{Patching preceding prompt tokens} \label{app:mi:preceding_tokens}
Figure \ref{fig:probe} suggests that only the last token (e.g.\ the whitespace \textit{<space>}) includes information about all bridge entities, and hence the reasoning process likely occurs at this position. 
In this section, we use activation patching to further demonstrate that the reasoning process of our language model only occurs at the last token position instead of preceding prompt tokens.

Our activation patching still addresses three types of runs: clean run, corrupted run and patched run. 
For each clean run, we randomly select a distinct instance as the corrupted run. 
For each layer and each token position in the input prompt, we create a patched run by replacing the residual stream of the clean run with that of the corrupted run at the corresponding position. 
The causal effect is calculated as $P_{\text{clean}} - P_{\text{patched}}$, where $P_{\text{clean}}$ denotes the output probability of the correct answer in the clean run, and $P_{\text{patched}}$ denotes the probability in the patched run. 
We report the average causal effect over 1000 held-out instances.

Figure \ref{fig:patching_across_tokens} presents the activation patching results across token positions.  
Noticeably, no significant causal effects are observed in any token positions following the \textit{<Entity>} token, except for the last \textit{<space>} token. 
Since the \textit{<Entity>} token is the first position where the model can access complete query information (i.e., relations and source entity), this result supports our claim that the reasoning process primarily occurs at the last token position.

We also observe large causal effects on relation tokens when patching deeper layers (e.g., the $4$th layer for the \textit{<r4>} token). 
We consider this effect is because the model only start to read the information of \textit{<r4>} relation since the $4$th layer when predicting the answer.
Hence, deeper layers of \textit{<r4>} position should not involve any reasoning-related computation.
To show this, we also perform the same activation patching experiment by replacing only the output of each MLP layer. 
As shown in Figure \ref{fig:patching_across_tokens_mlp}, the relation tokens only show causal effects in the first layer, further supporting our hypothesis that deeper layers do not reprocess relation information.

\subsection{Causal effects across training steps} \label{app:mi:cl}
In Section \ref{sec:e2}, we observed that LMs learn to solve \hop\ tasks through a layer-wise lookup process, with specific layers responsible for producing bridge entities from $1$-hop to $k$-hop. 
A key question is whether these circuits (i.e., layers) are developed sequentially from $1$-hop to $k$-hop or simultaneously across multiple hop positions during training. 
To investigate this, we apply the activation patching experiment described in Section \ref{sec:e2} at every checkpoint of the training process.

We focus on the model trained on \fourhopl\ with a $\times100$ budget, following the setup in Section \ref{sec:e2}. 
Checkpoints are saved every 1k training steps, and we apply activation patching at the last input token position. 
For each checkpoint, we measure the causal effect of each layer for bridge entities at each hop position.

\paragraph{LMs tend to build circuits of different $i$-hop bridge entities simultaneously.}
Figure \ref{fig:patch_across_training} shows the causal effect of each layer across training steps. 
We observe that circuits responsible for $1$-hop, $2$-hop, and $3$-hop bridge entities emerge simultaneously at around the $17000$th training step, with each circuit appearing in distinct layers (e.g., the $1$st layer for $1$-hop entity).
This pattern indicates that the model tends to develop circuits for different hop positions at once rather than sequentially from easier (e.g., $1$-hop) to more complex (e.g., $3$-hop) entities.

\paragraph{Curriculum learning gradually build circuits on existing ones.} 
We further analyze the development of circuits in the curriculum learning model trained on the \fourhop\ task with a $\times5$ budget (Section \ref{sec:e3}). 
Training this model includes 3 stages.
Checkpoints are saved every 1k steps, and causal effects are calculated at each stage.
For each stage, we calculate the causal effects using the following corrupted runs:  
\begin{itemize}
    \item \textit{C}\textsubscript{1-hop}: Assesses $1$-hop circuits across stage 1, 2 and 3.  
    \item \textit{C}\textsubscript{2-hop}: Assesses $2$-hop circuits across stages 2 and 3.  
    \item \textit{C}\textsubscript{3-hop}: Assesses $3$-hop circuits in stage 3.  
\end{itemize} 
\begin{figure*}[!t]
    \centering
    \includegraphics[width=\linewidth]{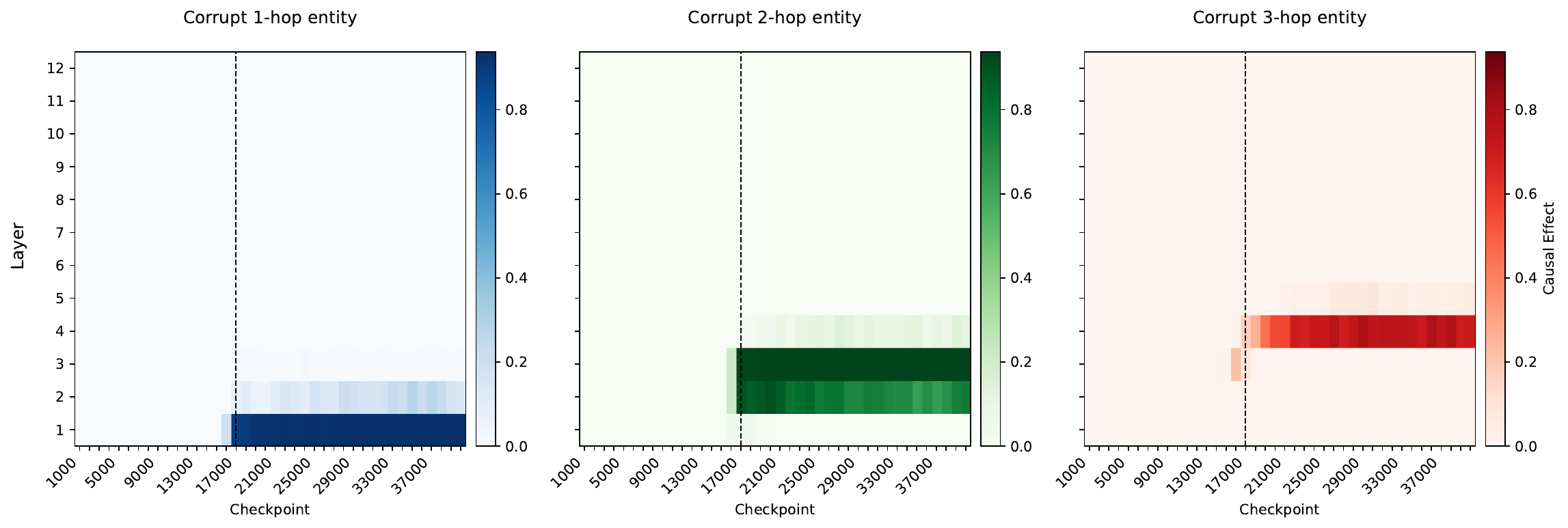}
    \caption{Causal effects calculated by corrupting $1$-hop, $2$-hop and $3$-hop bridge entity in our \textit{baseline} model.
    x-axis refers to checkpoints across training steps. 
    We observe that the circuits corresponding to different hop positions tend to emerge at once (e.g., around the $17{,}000$th step), rather than gradually developing over time.
    }
    \label{fig:patch_across_training}
\end{figure*}

\begin{figure*}[!t]
    \centering
    \includegraphics[width=\linewidth]{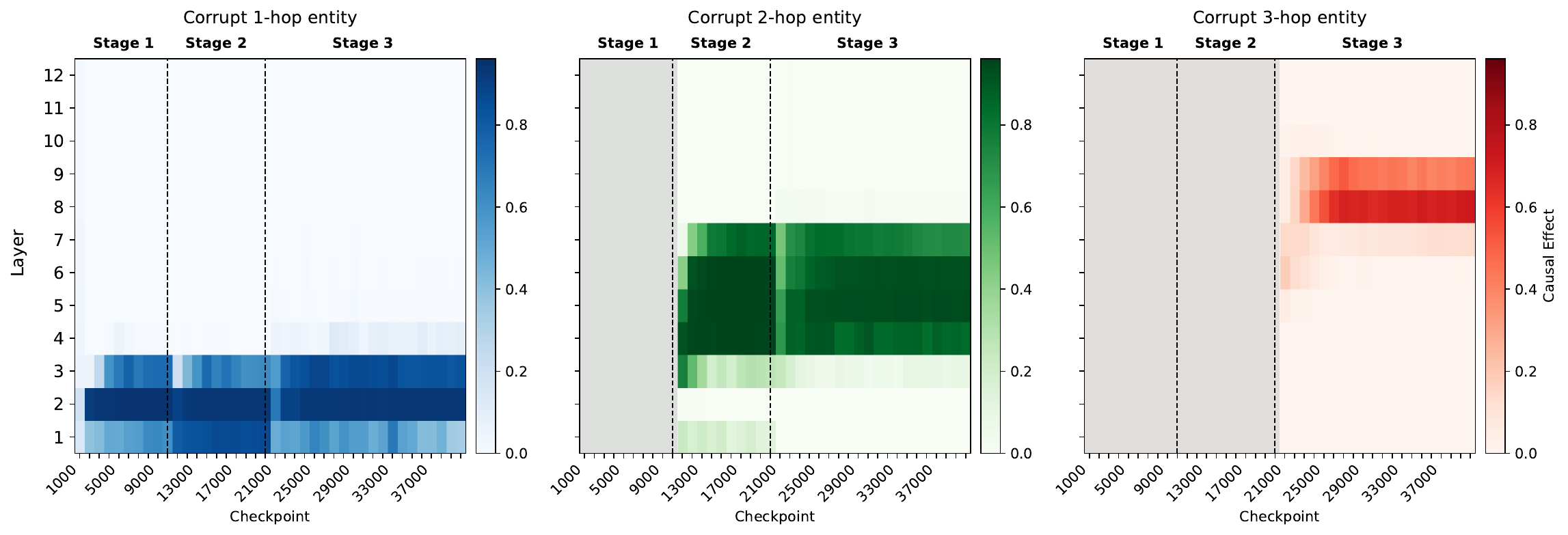}
    \caption{Causal effects calculated by corrupting $1$-hop, $2$-hop and $3$-hop bridge entity in our \textit{curriculum learning} model.
    x-axis refers to checkpoints across training steps.
    Gray regions indicate stages where causal effects are not calculated for certain entities, e.g., stage 1 does not include $3$-hop bridge entities in the training data, so the rightmost figure omits these effects in stage 1.   
    Circuits for higher-hop entities tend to be established on top of existing ones for lower-hop entities. 
    }\label{fig:cl_across_training}
\end{figure*}

Figure \ref{fig:cl_across_training} presents the results with our curriculum learning model. 
During stage 1, the model establishes circuits for $1$-hop entities. 
In stage 2, the $2$-hop circuit emerges, building upon the existing $1$-hop circuit. 
Stage 3 follows the same pattern, with the $3$-hop circuit extending the prior circuits. 
This layer-by-layer construction supports our hypothesis that curriculum learning encourages progressive circuit development, allowing higher-hop circuits to build upon existing lower-hop circuits, explaining the observed effectiveness in Section \ref{sec:e3}.

Curriculum learning has also been explored in prior work \citep{deng2024explicit, hao2024training}, where the focus is on internalizing explicit reasoning abilities. 
These studies start from chain-of-thought (CoT) rationales and train language models to reason with progressively fewer prompt tokens. 
In contrast, our setup does not rely on any explicit rationales. 
Instead, we study how curriculum learning affects the data budget required for training and provide an explanation for why such strategies improve sample efficiency from a mechanistic interpretability perspective.

\newcolumntype{L}{>{\raggedright\arraybackslash}X}
\newcolumntype{C}{>{\centering\arraybackslash}X}

\begin{table*}[!htp]
  \centering
  \scriptsize
  \setlength{\tabcolsep}{4pt}
  \begin{tabularx}{\textwidth}{@{} L L L | *{7}{C C} @{}}
    \toprule
    \multirow{2}{*}{Model}
      & \multirow{2}{*}{Size}
      & \multirow{2}{*}{Task}
      & \multicolumn{2}{c}{×1}
      & \multicolumn{2}{c}{×2}
      & \multicolumn{2}{c}{×5}
      & \multicolumn{2}{c}{×10}
      & \multicolumn{2}{c}{×20}
      & \multicolumn{2}{c}{×50}
      & \multicolumn{2}{c}{×100} \\
    & & 
      & Mean & Std 
      & Mean & Std 
      & Mean & Std 
      & Mean & Std 
      & Mean & Std 
      & Mean & Std 
      & Mean & Std \\
    \cmidrule(lr){1-3}
    \cmidrule(lr){4-5}\cmidrule(lr){6-7}\cmidrule(lr){8-9}
    \cmidrule(lr){10-11}\cmidrule(lr){12-13}\cmidrule(lr){14-15}\cmidrule(lr){16-17}

    \multirow{6}{*}{baseline}
      & \multirow{3}{*}{small}
        & 2-hop  & 99.8 & 0.1 &      &      &      &      &      &      &      &      &      &      &      &      \\
      &   & 3-hop  &  5.7 & 0.0 & 12.6 & 2.6 & 99.9 & 0.1 & 100.0 & 0.0 &      &      &      &      &      &      \\
      &   & 4-hop  &  4.3 & 0.4 &  6.2 & 0.5 &  6.7 & 0.2 &   9.2 & 0.4 &  96.4 & 3.1 &  96.4 & 0.0 & 100.0 & 0.0 \\
    \cmidrule(lr){2-17}
      & \multirow{3}{*}{large}
        & 2-hop  & 99.9 & 0.0 &      &      &      &      &      &      &      &      &      &      &      &      \\
      &   & 3-hop  &  2.5 & 0.3 &  3.1 & 0.1 &  4.9 & 1.6 &  94.6 & 9.4 & 100.0 & 0.0 &      &      &      &      \\
      &   & 4-hop  &  2.0 & 0.3 &  2.6 & 0.3 &  3.1 & 0.1 &   3.7 & 0.3 &   4.0 & 0.4 &   6.3 & 1.0 & 100.0 & 0.0 \\

    \midrule

    \multirow{6}{*}{mix}
      & \multirow{3}{*}{small}
        & 2-hop  &100.0 & 0.0 &      &      &      &      &      &      &      &      &      &      &      &      \\
      &   & 3-hop  & 29.2 & 3.0 & 88.1 & 8.6 & 99.9 & 0.1 & 100.0 & 0.0 &      &      &      &      &      &      \\
      &   & 4-hop  & 16.4 & 1.8 & 29.3 & 1.8 & 71.3 &41.4 &  99.8 & 0.1 & 100.0 & 0.0 & 100.0 & 0.0 & 100.0 & 0.0 \\
    \cmidrule(lr){2-17}
      & \multirow{3}{*}{large}
        & 2-hop  &100.0 & 0.0 &      &      &      &      &      &      &      &      &      &      &      &      \\
      &   & 3-hop  &  8.3 & 1.4 & 11.2 & 3.8 & 38.7 &18.1 & 100.0 & 0.0 & 100.0 & 0.0 &      &      &      &      \\
      &   & 4-hop  &  2.1 & 0.2 &  2.7 & 0.1 &  3.7 & 0.2 &   3.4 & 0.1 &   4.3 & 0.6 &   7.2 & 1.9 & 100.0 & 0.0 \\

    \midrule

    \multirow{6}{*}{curriculum}
      & \multirow{3}{*}{small}
        & 2-hop  &100.0 & 0.0 &      &      &      &      &      &      &      &      &      &      &      &      \\
      &   & 3-hop  & 56.1 & 1.5 & 96.0 & 0.7 &100.0 & 0.0 & 100.0 & 0.0 &      &      &      &      &      &      \\
      &   & 4-hop  & 29.3 & 2.7 & 68.7 & 5.4 & 99.6 & 0.2 & 100.0 & 0.0 & 100.0 & 0.0 & 100.0 & 0.0 & 100.0 & 0.0 \\
    \cmidrule(lr){2-17}
      & \multirow{3}{*}{large}
        & 2-hop  &100.0 & 0.0 &      &      &      &      &      &      &      &      &      &      &      &      \\
      &   & 3-hop  & 35.3 & 1.5 & 96.3 & 1.2 &100.0 & 0.0 & 100.0 & 0.0 & 100.0 & 0.0 &      &      &      &      \\
      &   & 4-hop  &  9.4 & 1.9 & 36.1 &14.8 &100.0 & 0.0 & 100.0 & 0.0 & 100.0 & 0.0 & 100.0 & 0.0 & 100.0 & 0.0 \\

    \bottomrule
  \end{tabularx}
  \caption{Accuracy (mean ± std) for 2-/3-/4-hop tasks under varying data budgets. Blank cells denote that the data budget exceeds the number of available questions.}
  \label{tab:detailed_results}
\end{table*}

\end{document}